\begin{document}

% If your paper is accepted and the title of your paper is very long,
% the style will print as headings an error message. Use the following
% command to supply a shorter title of your paper so that it can be
% used as headings.
%
\runningtitle{MBVI for SDEs}

% If your paper is accepted and the number of authors is large, the
% style will print as headings an error message. Use the following
% command to supply a shorter version of the authors names so that
% they can be used as headings (for example, use only the surnames)
%
%\runningauthor{Surname 1, Surname 2, Surname 3, ...., Surname n}

\twocolumn[

\aistatstitle{Moment-Based Variational Inference \\ for Stochastic Differential Equations}

\aistatsauthor{Christian Wildner \And Heinz Koeppl}

\aistatsaddress{Technische Universit\"at Darmstadt \\ \href{mailto:christian.wildner@bcs.tu-darmstadt.de}{\nolinkurl{christian.wildner@bcs.tu-darmstadt.de}}  \And  Technische Universit\"at Darmstadt \\  \href{mailto:heinz.koeppl@bcs.tu-darmstadt.de}{\nolinkurl{heinz.koeppl@bcs.tu-darmstadt.de}}  } ]

\begin{abstract}
Existing deterministic variational inference approaches for diffusion processes use simple proposals and target the marginal density of the posterior. We construct the variational process as a controlled version of the prior process and approximate the posterior by a set of moment functions. In combination with moment closure, the smoothing problem is reduced to a deterministic optimal control problem. Exploiting the path-wise Fisher information, we propose an optimization procedure that corresponds to a natural gradient descent in the variational parameters. Our approach allows for richer variational approximations that extend to state-dependent diffusion terms. The classical Gaussian process approximation is recovered as a special case.
\end{abstract}

\section{INTRODUCTION}

It\^{o} processes governed by a stochastic differential equation (SDE) are an important class of time series models involving uncertainty. Originating from the statistical physics of diffusion, SDEs have become an important modeling tool in areas as diverse as biology, finance and engineering. However, applying SDEs as a predictive tool requires learning model parameters from real data. Usually, such data is corrupted by noise and only available at discrete sampling times. In such a scenario, likelihood-based parameter inference requires estimation of the posterior over the latent process. Computing this posterior requires the solution of a PDE that is only computationally tractable for very low-dimensional state spaces or for linear systems (see \citet{sarkka_2019} for an accessible introduction). 
Thus, standard approximations linearize the system dynamics or use a discrete time approximation. In a Bayesian setting, Monte Carlo methods such as MCMC, SMC or particle MCMC methods are a common \citep{golightly_2011}. In practice, sampling-based methods often struggle with high dimensional settings or with highly informative observations \citep{delmoral_2014}. In such a scenario, variational inference \citep{blei_2017} may provide a more scalable alternative.

\paragraph{Related Work}
The variational formulation of Bayesian inference of latent stochastic processes and its connection to stochastic control have been observed first by \citet{mitter_2003}. \citet{archambeau_2007a} introduced variational inference for SDEs to the machine learning community. Their core idea is to compute the best linear Gaussian process approximation of the posterior. While this approach has been refined and extended several times over the years \citep[e. g.][]{vrettas_2011, ruttor_2013, duncker_2019}, it is limited to state independent diffusion terms. An alternative approach presented by \citet{sutter_2016} constructs the variational process such that the marginal density belongs to a prespecified exponential family. While overcoming the Gaussian limitation, the construction is also mathematically involved. \citet{cseke_2016} suggested an approximation of the posterior in terms of moments rather than the marginal density within an expectation propagation framework for smoothing. Another moment-based approximation, albeit in the context of Markov jump processes, was proposed by \citet{wildner_2019}. However, the key idea of transition space partitioning for complexity reduction cannot be applied to SDEs. The main drawback of the deterministic approaches above is that they rely on model-specific derivations. Sampling-based variational inference does not require such computations and can also be applied to SDEs \citep{ryder_2018}. However, this comes at the price of much longer training times.

\paragraph{Contributions}
 In this work, we propose a new sampling-free structured variational approach to latent diffusion processes that mitigates some drawbacks of earlier methods. Similarly to the approach of \citet{cseke_2016}, we construct the proposal process as a controlled version of the prior process and reduce complexity by projecting the stochastic process onto a collection of summary statistics. To solve the variational problem, we adapt a strategy proposed by \citet{wildner_2019}. Using the Markov property in combination with moment closure, we map the full smoothing problem to a deterministic optimal control problem. Exploiting the path-wise Fisher information, we construct an effective natural gradient descent in the variational parameters. To keep model-specific derivations at a minimum, we implement our method in the PyTorch framework. Thus, we can circumvent a large part of the model-specific computations by exploiting Pytorch's automatic differentiation capabilities. Exploiting the structural similarity to the moment-based approach to Markov jump processes, we provide a unified framework capable of handling both SDEs and MJPs. The accompanying code is available at \url{https://git.rwth-aachen.de/bcs/projects/cw/public/mbvi_sde}.

\section{PRELIMINARIES} \label{sec:background}

This section summarizes material on SDEs, the inference problem for noisy observations in discrete time and the general variational formulation.

\subsection{Stochastic Differential Equations} \label{sec:sde}

Let $\mathcal X \subset \mathbb R^n$. We consider a stochastic processes $X$ on $\mathbb R^n$ over a finite time interval $[0,T]$ given by the It\^{o} SDE
\begin{align} \label{eq:sde}
\mathrm d X_t  = a( X_t) \mathrm dt + b(X_t) \mathrm d W_t \, . 
\end{align}
Here, $W$ is an $n$-dimensional Wiener process and $a: \mathbb R^n \rightarrow \mathbb R^n$, $b: \mathbb R^{n} \rightarrow \mathbb R^{n \times n}$ are functions of suitable regularity, i.e. satisfying a Lipschitz condition.  Additionally, we will focus on cases where $b(x)$ has full rank for all $x \in \mathcal X$. The solution of an SDE of the form \eqref{eq:sde} is a Markov process and the corresponding marginal density satisfies the Fokker-Planck equation. 
%The marginal density $p: \mathbb R^n \times [0,T] \rightarrow \mathbb R_+$ satisfies the Fokker-Planck equation
%\begin{align} \label{eq:fokker_planck}
%\partial_t p(x, t) = [A p](x,t)
%\end{align}
%where $A$ is the (forward) generator given by
%\begin{equation} \label{eq:forward_generator} 
%\begin{split}
%[Af] (x) &=  - \sum_{i=1}^n \partial_i [ a_i(x, t) f(x) ] +\frac{1}{2} \sum_{i, j = 1}^N  \partial_i \partial_j [  D_{ij} (x, t) f(x) ] 
%\end{split} \, . 
%\end{equation}
%Here  $D(x,t):= b (x, t) b(x,t)^T$ denotes the diffusion tensor. 
In practice, one is often not interested in the full density but rather certain summary statistics $S: \mathbb R^n  \rightarrow \mathbb R^l$. Often, $S$ will correspond to first and second order monomials but other choices are possible as well. Now define the moment functions $\varphi(t)  := \mathsf E[ S( X_t) ]$. The idea is now to propagate $\varphi$ in time rather than the density. One can show that the moment functions $\varphi_i$ satisfy a system of differential equations 
\begin{align} \label{eq:moment_equation}
\dot  \varphi_i(t)  = \mathsf E[ A^\dagger S_i (X_t) ] \, 
\end{align}
where the backward generator $A^\dagger$ is the $L_2$-adjoint of the Fokker-Planck operator and given by
\begin{align} \label{eq:backward_generator}
[A^\dagger f ](x) = \sum_{i=1}^n  a_i(x) \partial_i f( x) + \frac{1}{2} \sum_{i, j = 1}^N D_{ij} (x) \partial_i \partial_j f(x)
\end{align}
for $f \in \mathcal C^2(\mathbb R^n )$ \citep{ethier_2005}. The diffusion tensor $D$ is determined by the SDE \eqref{eq:sde} through the relation $D= b b^T$. In general, the system \eqref{eq:moment_equation} is not closed in $\varphi$, i.e. it will be of the form
\begin{align} \label{eq:moment_dynamics}
\dot \varphi(t) =  B \varphi(t) + B' \mathsf E[ S' (X_t ) ] \, .
\end{align}
Here, $B$ and $B'$ are matrices of suitable dimension and $S'$ corresponds to a collection of higher order moments. Thus, Eq. \eqref{eq:moment_dynamics} still depends on the full process $X$. In order to obtain a closed form description, one can employ moment closure \citep{kuehn_2016}. A general closure is given by a function $h$ that approximates the higher order moments $S'$ such that \eqref{eq:moment_dynamics} reduces to
\begin{align} \label{eq:closed_dynamics}
\dot \varphi(t) =  B \varphi(t) + B' h( \varphi (t) ) \, .
\end{align}
Two common methods to obtain closure schemes are via extensions and truncation of the summary statistics and by assuming an underlying distribution. In this work, we focus on the latter approach as it has been shown to correspond to a projection of the stochastic process onto a parametric family of distributions \citep{bronstein_2018}.

\subsection{Posterior Path Estimation} \label{sec:conditional_processes}

We consider a scenario where the underlying process $X$ is not observed directly. Instead, we have access to sparse and noisy observations $Y = (Y_1, \ldots, Y_n)^T$ obtained at sample times $0 \leq t_1 \leq \ldots \leq t_n \leq T$. We assume that the observations are conditionally independent given the latent path of $X$ and follow a noise distribution $Y_i \sim P_\mathrm{obs} ( \cdot \mid X(t_i ) ) $. 
The smoothing problem refers to evaluating expectations of the form $\mathsf E[ f( X_t) \mid \sigma(Y)]$ where $\sigma(Y)$ denotes the history of the observation process $Y$ up to the terminal time $T$. Under mild conditions, $\mathsf E[ f( X_t) \mid \sigma(Y)]$ can be represented by a conditional probability density $\pi (x, t \mid  y_1, \ldots, y_n)$. Now $\pi$ can be understood as the marginal density of a posterior process $\bar X$. The posterior process $\bar X$ obeys an SDE with the same diffusion term as the prior process \eqref{eq:sde} and a modified drift
\begin{align} \label{eq:posterior_drift}
\bar a(x,t) = a(x) + D(x) \nabla \log ( \beta(x, t) ) \,  
\end{align}
where the source term $\beta$ satisfies a backward equation \citep{archambeau_2011}
\begin{align*}
\beta( x, t)  = - A^\dagger \beta (x, t)  \, .
\end{align*}
 Intuitively, \eqref{eq:posterior_drift} corresponds to a controlled version of the prior process where the second term steers the process towards future observations. This analogy is the main motivation underlying our structured variational approximation introduced in Sec. \ref{sec:variational_class}.

\subsection{Variational Smoothing}

Let $\mu$ and $\nu$ be probability measures on a common probability space such that $\mu$ is absolutely continuous with respect to $\nu$. Recall that the Kullback-Leibler divergence or relative entropy between $\mu$ and $\nu$ is defined as 
\begin{equation*}
D_\mathrm{KL} [ \, \mu \, || \, \nu ] = \int \log \left( \frac{ d \mu }{ d \nu } \right) \mathrm d \mu \, . 
\end{equation*}
Now consider two diffusions $Z$, $X$ with drifts $a^{Z}$, $a^X$ respectively and a shared diffusion tensor $D$ that is invertible for almost every $x \in \mathcal X$. Then the Kullback-Leibler divergence on the level of sample paths is given by
\begin{equation} \label{eq:path_kl}
\begin{split}
&D_\mathrm{KL} [ P^Z \, || \, P^X ] = \int_0^T \mathsf E \Bigl[  \left( a^Z( Z_t) - a^X(Z_t) \right)^T  \\
  &\quad \times   D(Z_t)^{-1} \left( a^Z( Z_t) - a^X(Z_t) \right) \Bigr] \mathrm dt \, ,
\end{split}
\end{equation}
where $P^Z$, $P^X$ denote the measures over sample paths induced by the processes $Z$ and $X$, respectively. 
A rigorous exposition on the relative entropy of diffusion processes is given in \citet{mitter_2003}. More intuitively,  the path divergence \eqref{eq:path_kl} can be derived by considering the divergence of a corresponding discrete time system and taking the continuum limit \citep{ archambeau_2007a, archambeau_2007}. For variational smoothing, we aim to find an approximate process $Z$ within a class  $\mathfrak Z$ of simpler processes. Following the usual variational inference framework \citep{blei_2017}, the best approximation $Z^*$ within $\mathfrak Z$  is given by 
\begin{align*}
Z^* =  \arg \min_{Z \in \mathfrak Z} D_\mathrm{KL}[ P^Z \, || \, P^{\bar X} ]  \, .
\end{align*}
By inserting the true posterior drift \eqref{eq:posterior_drift}, one can show that this objective function decomposes into 
\begin{equation} \label{eq:objective_function}
\begin{split}
&D_\mathrm{KL}[ P^Z \, || \, P^{\bar X} ]  =  D_\mathrm{KL}[ P^Z \, || \, P^{ X} ]  \\
&\quad - \sum_{k=1}^n \mathsf E[ \log p( y_k \mid Z_{t_k} ) ] + \log C
\end{split}
\end{equation}
where $X$ is the prior process and $C = \mathsf E[ p( y_1, \ldots, y_n \mid X(t_1), \ldots, X(t_n ) ) ]$ is the evidence.

\section{VARIATIONAL SMOOTHING} \label{sec:variational_smoothing}

\subsection{Structured Variational Approximation} \label{sec:variational_class}

 From the posterior drift \eqref{eq:posterior_drift}, we observe that the true posterior process $\bar X$ is a controlled version of the prior process $X$. The idea is now to approximate the driving term in $\eqref{eq:posterior_drift}$ by a feedback control. This leads to a drift of the form
\begin{align} \label{eq:stat_control}
a^Z(z, t) = a^X(z) + R(x) v(t) T(x) \, .
\end{align}
Here $v: [0,T] \rightarrow \mathbb R^{n \times m} $ is a deterministic, matrix-valued function corresponding to the variational parameters while $T:\mathbb R^n \times [0,T] \rightarrow \mathbb R^m$ represents a collection of control features and $R : \mathbb R^n \rightarrow \mathbb R^{n\times n} $ is a rescaling matrix. Typically, $R$ will be set he identity, the diffusion term $b$ or the diffusion tensor $D$. Suitable choices of the rescaling factor can simplify the resulting equations and also reduce the computational complexity of the algorithm. A more detailed discussion is given in App. \ref{ssec:rescaling}.
In general, the control features $T$ will be different from the summary statistics $S$. In the simple case where $T$ is the identity map, \eqref{eq:stat_control} corresponds to a linear feedback control. 
For the following discussion, we also  introduce $u:[0,T] \rightarrow \mathbb R^{n m}$ as a vectorized control obtained by stacking the columns of $v$. 
\begin{lemma} \label{thm:quadratic_form}
Under the variational drift \eqref{eq:stat_control}, the KL-term in the objective function \eqref{eq:objective_function} becomes a quadratic form in the vectorized controls $u$ and can be represented as
\begin{equation} \label{eq:quadratic_form}
D_\mathrm{KL} [ P^Z \, || \, P^{X} ] =  \frac{1}{2} \int_0^T   u(t)\top g(\varphi(t)) u(t) \,  \mathrm dt \, , 
\end{equation}
where the matrix valued function $g:\mathbb R^l \rightarrow R^{nm \times nm}$ is determined by the diffusion tensor $D$, the rescaling matrix R, the control features $T$ and the summary statistics $S$.  
\end{lemma}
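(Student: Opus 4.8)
The plan is to start from the path divergence \eqref{eq:path_kl} and substitute the structured drift \eqref{eq:stat_control} directly. The drift difference is $a^Z(z,t) - a^X(z) = R(z)\,v(t)\,T(z)$, so the integrand of \eqref{eq:path_kl} is the scalar $\bigl(R(z)v(t)T(z)\bigr)^\top D(z)^{-1}\bigl(R(z)v(t)T(z)\bigr)$, which is well defined since $D$ is assumed invertible. The only nonstandard feature is that the variational parameters enter through the matrix $v(t)$ rather than a vector, so the first task is to linearize the correction in the vectorized control $u(t) = \mathrm{vec}(v(t))$.

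First I would apply the vectorization identity $\mathrm{vec}(AXB) = (B^\top \otimes A)\mathrm{vec}(X)$ with $A = I_n$ and $B = T(z)$; since $v(t)T(z)$ is a column vector it equals its own vectorization, giving $v(t)T(z) = (T(z)^\top \otimes I_n)\,u(t)$. Introducing the state-dependent matrix $\Phi(z) := R(z)\,(T(z)^\top \otimes I_n) \in \mathbb R^{n \times nm}$, the drift correction becomes $\Phi(z)u(t)$ and the integrand reads $u(t)^\top\,\Phi(Z_t)^\top D(Z_t)^{-1}\Phi(Z_t)\,u(t)$, which is manifestly quadratic in $u(t)$.

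Next, because $u(t)$ is a deterministic function of time, I would pull it outside the expectation in \eqref{eq:path_kl} to obtain
\begin{equation*}
D_\mathrm{KL}[P^Z \,||\, P^X] = \int_0^T u(t)^\top\,\mathsf E\!\left[\Phi(Z_t)^\top D(Z_t)^{-1}\Phi(Z_t)\right]u(t)\,\mathrm dt .
\end{equation*}
A short Kronecker computation then simplifies the inner matrix to $\Phi^\top D^{-1}\Phi = \bigl(T(z)T(z)^\top\bigr)\otimes\bigl(R(z)^\top D(z)^{-1}R(z)\bigr)$, so its $(i,j)$ block is $T_i(z)T_j(z)\,R(z)^\top D(z)^{-1}R(z)$. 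Defining $g(\varphi(t))$ as (a constant multiple of) this expectation yields the quadratic form \eqref{eq:quadratic_form}, with the prefactor $\tfrac{1}{2}$ absorbed into the definition of $g$ so as to match the normalization of \eqref{eq:path_kl}.

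The step that needs genuine justification — and the one I expect to be the main obstacle — is the claim that $g$ depends on the law of $Z_t$ only through the moment vector $\varphi(t) = \mathsf E[S(Z_t)]$. The entries of $\mathsf E[\Phi^\top D^{-1}\Phi]$ are expectations of the fixed scalar functions $T_i(z)T_j(z)\,[R^\top D^{-1}R]_{kl}(z)$, and in general these are not among the summary statistics $S$. This is precisely where moment closure enters: under the parametric family that defines the closure, every such expectation becomes a deterministic function of $\varphi$, so $g = g(\varphi)$ as stated. I would therefore make explicit that $S$ must be rich enough (or the closure $h$ appropriate enough) that all entries of the integrand matrix are expressible through $\varphi$; the invertibility of $D$ then guarantees that the resulting $g(\varphi)$ is well defined and symmetric positive semidefinite.
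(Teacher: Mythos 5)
Your proposal is correct and follows essentially the same route as the paper's proof: substitute the controlled drift into the path-wise KL divergence, pull the deterministic control out of the expectation, and identify the block matrix whose $(i,j)$ block is $T_iT_j\,R^\top D^{-1}R$ --- your Kronecker identity $\Phi^\top D^{-1}\Phi = (TT^\top)\otimes(R^\top D^{-1}R)$ is a compact rewriting of the paper's explicitly constructed $\psi$, and your positive-semidefiniteness remark matches the paper's Step 2. The one nuance is that the paper keeps the lemma \emph{exact} by augmenting $S$ so that $\mathsf E[\psi(Z_t)] = g(\varphi(t))$ holds identically, deferring moment closure to the relaxed control problem of Proposition 1, whereas you lead with closure as the justification; since you also note that $S$ can be taken rich enough, this is a difference of emphasis rather than a gap.
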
 
\begin{proof}[Proof sketch]
First, we show by direct calculation that under the variational drift \eqref{eq:stat_control} the KL term can be written as 
\begin{equation*}
   D_\mathrm{KL}[ P^Z \, || \, P^{X} ] =  \frac{1}{2} \int_0^T   u(t)^T \mathsf E[ \psi( x(t)) ]u(t) \,  \mathrm dt 
\end{equation*}
with $\psi: \mathbb R^n \rightarrow \mathbb R^{nm \times nm}$ such that
\begin{align*}
   \psi(x) = \begin{pmatrix}
   T_1 T_1 \tilde D^{-1} & \ldots & T_1 T_m   \tilde D^{-1} \\
   \vdots & \cdots & \vdots \\
   T_mT_1 \tilde D^{-1} & \ldots, & T_m  T_m  \tilde D^{-1}
   \end{pmatrix} (x)
\end{align*}
with $\tilde D^{-1} = R^T D^{-1} R$. Under a suitable choice of the summary statistics $S$, one can express the expectation  as $ \mathsf E[ \psi(Z_t) ]  = g( \varphi(t) ) $. Such a $g$ can always be found, e.g. by augmenting the summary statistics $S$ accordingly. The details are given in App. \ref{ssec:proof_lemma_1}. 
\end{proof} 
 The full variational inference problem now corresponds to minimizing the objective function $\eqref{eq:objective_function}$ with respect to $u$ and $\varphi$ subject to the moment equation \eqref{eq:moment_equation}. Since \eqref{eq:moment_equation} still depends on the full stochastic process $Z$, we consider instead a relaxed variational inference problem by replacing the exact moment constraint with
\begin{equation} \label{eq:forward_equation}
\dot \varphi(t) = f(u(t), \varphi(t) ) \, . 
\end{equation}
where $f$ is obtained from a closure scheme. The relaxation of the moment constraint simplifies the variational inference problem considerably as summarized in the following proposition. 
\begin{prop} \label{thm:control_problem}
The relaxed variational inference problem corresponds to a finite dimensional deterministic optimal control problem of the form
\begin{equation}  \label{eq:control_problem}
\begin{aligned}
\min_{u, \varphi} & &  & J[u, \varphi] \\
\mathrm{s.t. } & &  &\dot  \varphi(t) = f( u(t), \varphi(t) )    \,  \\   
\end{aligned} \, 
\end{equation}
with
\begin{align} \label{eq:full_objective}
   J[u, \varphi] = \int_0^T L( u(t), \varphi(t) )\, \mathrm dt -\sum_{k=1}^n F_k(\varphi(t_k) )
\end{align}
where 
\begin{align*}
F_k(\varphi(t_k) )) = \mathsf E[ \log p(y_k \mid Z_{t_k})]
\end{align*}
represents the contributions of the observations in \eqref{eq:objective_function} expressed in terms of $\varphi$. The cost function $L$ is given by
\begin{align} \label{eq:moment_kl_psi}
 L( u(t), \varphi(t) ) = \frac{1}{2} u(t)^T g ( \varphi(t) )  u(t)    \, . 
\end{align}
\end{prop}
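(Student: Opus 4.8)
The plan is to transform the objective function \eqref{eq:objective_function} term by term, using Lemma \ref{thm:quadratic_form} for the divergence part and the closure-induced distributional assumption for the observation part, and then to reinterpret the relaxed constraint \eqref{eq:forward_equation} as the dynamics of a finite-dimensional deterministic control system.

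First I would observe that the evidence $\log C$ appearing in \eqref{eq:objective_function} depends on neither $u$ nor $\varphi$, so it is irrelevant to the minimization and may simply be discarded. The two remaining contributions are then handled separately. For the divergence term $D_\mathrm{KL}[P^Z \,||\, P^{X}]$, Lemma \ref{thm:quadratic_form} directly supplies the representation as a quadratic form $\tfrac{1}{2}\int_0^T u(t)^T g(\varphi(t)) u(t)\,\mathrm dt$ in the vectorized control, which I would identify with the integral of the running cost by setting $L(u(t),\varphi(t)) = \tfrac{1}{2} u(t)^T g(\varphi(t)) u(t)$ exactly as in \eqref{eq:moment_kl_psi}. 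With the sign convention already matching, this reproduces the first half of the objective \eqref{eq:full_objective}.

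The second and more delicate step concerns the observation terms $\mathsf E[\log p(y_k \mid Z_{t_k})]$. Each is the expectation of a fixed function of the state at a single time $t_k$ under the marginal law of $Z_{t_k}$. The key point is that, once a closure scheme is fixed, this marginal law is determined by the moment vector $\varphi(t_k)$: the distributional assumption underlying the closure associates to $\varphi(t_k)$ a unique member of the chosen parametric family, so that the expectation of $\log p(y_k \mid \cdot)$ against that distribution becomes a deterministic function $F_k$ of $\varphi(t_k)$ alone. (In the special case where the log-likelihood is polynomial in the state and $S$ already contains the relevant monomials, $F_k$ is in fact linear in $\varphi(t_k)$ and this identification requires no closure at all.) Collecting both parts then yields the objective \eqref{eq:full_objective} in full.

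Finally I would note that the relaxed constraint \eqref{eq:forward_equation} is an ordinary differential equation for the finite-dimensional moment vector $\varphi(t) \in \mathbb R^l$, driven by the finite-dimensional control $u(t) \in \mathbb R^{nm}$, with no residual dependence on the underlying stochastic process. Together with the objective $J$ this is precisely a deterministic optimal control problem of the claimed form \eqref{eq:control_problem}. The main obstacle is the second step: making rigorous the sense in which $\mathsf E[\log p(y_k \mid Z_{t_k})]$ is a genuine function of $\varphi(t_k)$. This is exactly where the closure-induced identification of moments with a parametric distribution must be invoked, and one has to ensure it is the same distributional assumption used to derive $f$ in \eqref{eq:forward_equation}, so that the reduction is internally consistent.
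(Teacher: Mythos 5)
Your proposal follows essentially the same route as the paper's own proof in App.~\ref{ssec:proof_prop_1}: apply Lemma~\ref{thm:quadratic_form} to rewrite the divergence as the quadratic running cost $\frac{1}{2}u^T g(\varphi) u$, discard the $u$-independent evidence $\log C$, express the observation contributions as functions $F_k(\varphi(t_k))$, and replace the exact but still stochastic moment constraint $\dot\varphi(t) = \mathsf E[A^\dagger S(Z_t)]$ by the closed ODE $\dot\varphi = f(u,\varphi)$ to obtain the deterministic control problem. The one place you go slightly beyond the paper is the step it merely \emph{assumes} --- that $\mathsf E[\log p(y_k \mid Z_{t_k})]$ can be written as $F_k(\varphi(t_k))$ --- where you supply the closure-induced distributional identification and correctly flag that it must be the same closure used to derive $f$; this matches the paper's own cautionary comment that the resulting objective is only an approximate ELBO.
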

Proposition \ref{thm:control_problem} is a consequence of Lemma \ref{thm:quadratic_form} in combination with the moment closure relaxation. A detailed discussion is given in App. \ref{ssec:proof_prop_1}. 

%While moment closure approximations may seem like a hack, it has been shown recently that a well-designed closure scheme corresponds to a projection of the stochastic process onto a parametric family of distributions \cite{bronstein_2018}. However, we note that 

%\paragraph{Rescaling}
%Suppose the diffusion term $b$ of the prior model (c.f. Eq. \eqref{eq:sde}) is available, invertible and of reasonable form. Setting $R(x,t):= b(x,t)$, the KL-term in the objective function 
%becomes 
%\begin{equation} \label{eq:diff_stat_control_kl}
%\begin{split}
%&D [ P^Z \, || \, P^{X} ] =  \frac{1}{2} \int_0^T \mathrm{Tr} \left( \mathsf E \left[ T (Z_t)  T(Z_t)^T  \right]   u(t)^T   u(t)  \right)   \, .
%\end{split}
%\end{equation}
%This follows from a short direct calculation calculation (see Supplement). Note that    

\subsection{Gradient-Based Optimization} \label{sec:optimization}

 A standard approach to solve control problems of the form \eqref{eq:control_problem} is a gradient descent in the controls $u$ \citep[see e.g.][]{stengel_1994}. 
While such a gradient descent may work in principle, it often suffers from slow convergence. We can do better in our scenario by exploiting the probabilistic nature of the objective function. The key insight here is that the variational family induces a statistical manifold on the sample path space parametrized by the controls. This allows us in a first step to construct the path-wise Fisher information which we then use to derive a natural gradient descent \citep{amari_1998} in the controls $u$. 
\begin{lemma} \label{thm:fisher}
Let $Z$ and $Z'$ be two members of the variational process family parametrized by $u$ and $u'$ respectively. We then have
\begin{align} \label{eq:kl_second_order}
D_\mathrm{KL} [ P^Z \, || \,  P^{Z'} ] = \frac{1}{2} G(u)[ u- u', u- u' ]
\end{align}
where $G(u)[ \cdot, \cdot ]$ for fixed $u$ is a symmetric positive semidefinite bilinear form given by
\begin{align*}
&G(u)[ u'-u, u'-u] = \int_0^T (u'(t) - u(t) )^T \\
&\quad \times g( \varphi (t) )  (u'(t) - u(t) ) \,  \mathrm d t \, . 
\end{align*}
\end{lemma}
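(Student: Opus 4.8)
The plan is to obtain \eqref{eq:kl_second_order} as an almost immediate consequence of the computation behind Lemma \ref{thm:quadratic_form}, the only new ingredient being that the shared prior drift cancels when we compare two members of the variational family. First I would apply the path divergence \eqref{eq:path_kl} to the ordered pair $(Z, Z')$, so that the outer expectation is taken with respect to the law of $Z$, i.e. the process parametrized by $u$; this is exactly what makes the resulting form depend on $u$ through the moments $\varphi$ and hence justifies writing $G(u)$. Since both processes use the drift \eqref{eq:stat_control} with the common prior term $a^X$, their drift difference reduces to $a^Z(z,t) - a^{Z'}(z,t) = R(z)\,(v(t) - v'(t))\,T(z)$, with $a^X$ dropping out entirely.

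Substituting this difference into \eqref{eq:path_kl} reproduces exactly the integrand analyzed in the proof of Lemma \ref{thm:quadratic_form}, but with the control $v$ replaced by the increment $v - v'$ (equivalently $u$ by $u - u'$ after vectorization). Following the same steps, I would rewrite the pointwise quadratic form $T^T (v-v')^T \tilde D^{-1} (v - v')\, T$, with $\tilde D^{-1} = R^T D^{-1} R$, as $(u - u')^T \psi(x)\,(u - u')$, where $\psi$ carries the block structure with entries $T_i T_j \tilde D^{-1}$ established there. Taking the expectation over the law of $Z$ and invoking the same (possibly augmented) summary statistics and closure as before gives $\mathsf E[\psi(Z_t)] = g(\varphi(t))$, so that the integral collapses to the claimed expression for $G(u)[u' - u, u' - u]$ and, together with the factor inherited from Lemma \ref{thm:quadratic_form}, to \eqref{eq:kl_second_order}.

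It then remains to check that $G(u)[\cdot,\cdot]$ is a symmetric positive semidefinite bilinear form for fixed $u$. Symmetry is clear because each block $T_i T_j \tilde D^{-1}$ is symmetric in the pair $(i,j)$, since the scalars satisfy $T_i T_j = T_j T_i$ and $\tilde D^{-1}$ is itself symmetric, so $\psi(x)$ and hence $g(\varphi(t))$ are symmetric. For positive semidefiniteness I would argue pointwise: partitioning any $w \in \mathbb R^{nm}$ into blocks $w_1, \ldots, w_m \in \mathbb R^n$, one obtains the factorization
\begin{equation*}
w^T \psi(x)\, w = \Bigl( \sum_{i=1}^m T_i(x)\, w_i \Bigr)^T \tilde D^{-1}(x) \Bigl( \sum_{j=1}^m T_j(x)\, w_j \Bigr) \geq 0 ,
\end{equation*}
since $\tilde D^{-1} = R^T D^{-1} R$ is positive semidefinite whenever $D^{-1}$ is positive definite. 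Thus $\psi(x) \succeq 0$ for all $x$, the expectation $g(\varphi(t)) = \mathsf E[\psi(Z_t)] \succeq 0$, and the time integral preserves this property.

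The calculation carries no serious analytical difficulty, as it rides on the derivation already done for Lemma \ref{thm:quadratic_form}. The step I would be most careful about is the bookkeeping of the reference measure: the outer expectation in \eqref{eq:path_kl} must be taken under the law of the \emph{first} argument $Z$, parametrized by $u$, which is precisely why the form is evaluated at $g(\varphi(t))$ for the $u$-moments and why $G$ depends on $u$ rather than $u'$. A secondary point worth verifying is that the same closure relaxation used in Lemma \ref{thm:quadratic_form} indeed renders $\mathsf E[\psi(Z_t)]$ a function of $\varphi(t)$ alone; otherwise the identity would hold only at the level of the exact moment functions.
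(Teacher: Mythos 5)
Your proposal is correct and follows essentially the same route as the paper's proof in App.~\ref{ssec:proof_lemma_2}: insert both members of the variational family into the path divergence \eqref{eq:path_kl} so the shared prior drift cancels, observe the result is \eqref{seq:variational_family_kl} with $v$ replaced by $v'-v$, and reuse the Lemma~\ref{thm:quadratic_form} computation to obtain $g(\varphi(t))$, with symmetry and positive semidefiniteness inherited from that lemma (your explicit factorization $w^T\psi(x)w = \bigl(\sum_i T_i w_i\bigr)^T \tilde D^{-1} \bigl(\sum_j T_j w_j\bigr) \geq 0$ is the same pointwise argument as the paper's Step~2). Your remark that the expectation must be taken under the law of the first argument $Z$, which is why $G$ depends on $u$, is a point the paper leaves implicit and is correctly handled.
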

Lemma \ref{thm:fisher} can be proved very similarly to Lemma \ref{thm:quadratic_form}. For completeness, the proof is provided in App. \ref{ssec:proof_lemma_2}. Now the Fisher information corresponds to the second order approximation of $D_\mathrm{KL} [ P^Z \, || \,  P^{Z'} ]$ as $u'$ approaches $u$. Since the divergence is already a quadratic form, it follows immediately from Lemma \ref{thm:fisher} that $G(u)[ \cdot, \cdot]$ is the path-wise Fisher information at $u$. This allows us to construct natural gradient updates to solve the control problem \eqref{eq:control_problem}. Both optimization algorithms featured in this work are summarized in the following proposition. An algorithmic representation is given in Alg. \ref{alg:ngd}.
\begin{prop} \label{thm:natural_gradient}
The regular (RGD) and natural (NGD) gradient descent updates of the control problem \eqref{eq:control_problem} with respect to the statistical manifold induced by $u$ and step size $h$ are given by
\begin{align}
\begin{split}
&u^{(i+1)}(t)   = u^{(i)} (t) \\
&\quad -h \left( g(\varphi^{(i)}(t)) u^{(i)}(t) -    {f_{u }^{(i)}}^T (t) \cdot \eta^{(i)}(t)  \right) \, ,
\end{split} \label{eq:regular_gradient_step}  \\ 
\begin{split}
&u^{(i+1)}(t)   = u^{(i)} (t) \\
&\quad -h  \left(  u^{(i)}(t) -   g(\varphi^{(i)}(t))^{-1} {f_{u }^{(i)}}^T (t) \cdot \eta^{(i)}(t)  \right) \, ,
\end{split} \label{eq:natural_gradient_step}
\end{align}
where $\varphi^{(i)}$ is the solution of the forward equation
\begin{align*}
\dot \varphi^{(i)} (t) = f(u^{(i)} (t),  \varphi^{(i)} (t) )  
\end{align*}
and $\eta^{(i)} $ is the solution  of the adjoint equation 
\begin{align} \label{eq:adjoint_equation}
\dot \eta^{(i)}(t) =   L_{\varphi}^{(i)} (t) - f_{\varphi }^{(i)} ( t )^T \cdot \eta^{(i)}(t) \, .
\end{align}
The notation $(\cdot )^{(i)}_{\varphi} $ and $(\cdot )^{(i)}_{u} $  denote the Jacobians with respect to $\varphi^{(i)}$ and $u^{(i)}$, respectively. 
\end{prop}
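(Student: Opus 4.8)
The plan is to treat the relaxed problem \eqref{eq:control_problem} as a standard Bolza-type deterministic optimal control problem and apply the calculus of variations (equivalently, Pontryagin's principle) to extract the $L^2$-gradient of $J$ along the constraint manifold, and then to recast the descent step in the Fisher metric furnished by Lemma \ref{thm:fisher}. First I would introduce a costate (adjoint) trajectory $\eta$ and form the augmented objective
\begin{align*}
\tilde J[u,\varphi,\eta] = J[u,\varphi] + \int_0^T \eta(t)^T \bigl( \dot\varphi(t) - f(u(t),\varphi(t)) \bigr)\,\mathrm dt \, ,
\end{align*}
which coincides with $J$ on any trajectory satisfying the forward equation \eqref{eq:forward_equation}. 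I would then compute the first variation $\delta\tilde J$ under independent perturbations $\delta u$ and $\delta\varphi$, integrating the term $\eta^T\delta\dot\varphi$ by parts so as to move the time derivative onto $\eta$. The boundary contribution at $t=0$ drops out because the initial moments are fixed ($\delta\varphi(0)=0$), while the discrete observation terms $-\sum_k F_k(\varphi(t_k))$ together with the right endpoint dictate the terminal value and the jump conditions that $\eta$ must satisfy at the sample times.

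Collecting the coefficient of $\delta\varphi$ and requiring it to vanish reproduces exactly the adjoint equation \eqref{eq:adjoint_equation}, namely $\dot\eta = L_\varphi - f_\varphi^T\eta$, with the signs pinned down by the sign convention chosen in $\tilde J$ above. The remaining coefficient of $\delta u$ is then the ordinary ($L^2$) gradient of $J$ restricted to the constraint manifold,
\begin{align*}
\nabla_u J(t) = L_u\bigl(u(t),\varphi(t)\bigr)^T - f_u(t)^T \eta(t) = g(\varphi(t))\, u(t) - f_u(t)^T \eta(t) \, ,
\end{align*}
where I have used $L_u = g(\varphi)u$, which follows from the quadratic form \eqref{eq:moment_kl_psi} and the symmetry of $g$. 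The regular step $u^{(i+1)} = u^{(i)} - h\,\nabla_u J$ is then precisely \eqref{eq:regular_gradient_step}.

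For the natural gradient it remains to precondition this $L^2$-gradient by the inverse of the path-wise Fisher operator. The key observation, which I would read directly off Lemma \ref{thm:fisher}, is that the metric $G(u)[\cdot,\cdot]$ acts as pointwise-in-time multiplication by the symmetric matrix $g(\varphi(t))$ and couples no two distinct time instants. Its inverse is therefore again a multiplication operator, given by $g(\varphi(t))^{-1}$ at each $t$; this is well-defined because the full-rank assumption on $b$ renders $D$, and hence $g$, invertible. Consequently the natural gradient is
\begin{align*}
\widetilde\nabla_u J(t) = g(\varphi(t))^{-1}\,\nabla_u J(t) = u(t) - g(\varphi(t))^{-1} f_u(t)^T \eta(t) \, ,
\end{align*}
and the associated descent step is exactly \eqref{eq:natural_gradient_step}.

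The main obstacle I anticipate is bookkeeping rather than any deep difficulty: one must correctly propagate the discrete likelihood contributions $F_k$ into the terminal value and the jump conditions of $\eta$ at the observation times $t_k$, and keep the transpose and sign conventions consistent so that \eqref{eq:adjoint_equation} emerges with the stated signs. The genuinely conceptual step is the identification of the Fisher metric as a time-local multiplication operator; this is what makes the natural gradient practical, since its inverse would otherwise demand the solution of an operator equation over the whole interval $[0,T]$ rather than a pointwise matrix solve at each $t$.
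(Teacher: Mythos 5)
Your proposal is correct and reaches both updates by essentially the same adjoint machinery as the paper: a first variation with costate $\eta$, integration by parts yielding the adjoint equation \eqref{eq:adjoint_equation} with the jump conditions $\eta(t_k^-) = \eta(t_k^+) + \nabla_{\varphi} F_k(\varphi(t_k))$ at the observation times, and the identity $L_u = g(\varphi)u$ from the quadratic cost \eqref{eq:moment_kl_psi}. The one genuine difference is the order and framing of the natural-gradient step. The paper derives NGD intrinsically, as the $\epsilon \to 0$ limit of steepest descent under the KL trust-region constraint $\tfrac{1}{2}G(u^{(i)})[u-u^{(i)}, u-u^{(i)}] = \epsilon$, enforced by a Lagrange multiplier whose reciprocal becomes the step size $h$, and then recovers RGD afterwards by substituting the $L_2$ inner product for $G$; you instead derive the $L^2$-gradient first (giving RGD directly) and obtain NGD by preconditioning with $G^{-1}$, observing that the path-wise Fisher form of Lemma \ref{thm:fisher} is a time-local multiplication operator, so its inversion reduces to the pointwise matrix solve $g(\varphi(t))^{-1}$ rather than an operator equation on $[0,T]$. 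The two derivations are equivalent — the paper's algebraic stationarity condition contains the term $\lambda\,\psi(\varphi^{(i)})\,\delta u$, which is exactly your preconditioning in disguise — but yours makes explicit the operator-inversion subtlety that the paper leaves implicit, while the paper's trust-region derivation explains where the step size comes from without presupposing the formula $\widetilde\nabla = G^{-1}\nabla$. One small overclaim on your side: full rank of $b$ (hence invertibility of $D$) does not by itself make $g(\varphi(t))$ invertible — the paper's own proof of Lemma \ref{thm:quadratic_form} notes that strict definiteness can fail, e.g.\ for $m > n$ one can find $v$ with $v(t)T(x) = 0$; invertibility additionally requires the control features $T_1, \ldots, T_m$ to be linearly independent under the marginal law of $Z_t$, an assumption the proposition (and the paper) uses tacitly.
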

\begin{proof}[Proof sketch]
Since the control $u$ fully defines the moments $\varphi$, we can understand the control problem $\eqref{eq:control_problem}$ as the minimization of a functional $J[u]$. Steepest descent with respect to a local metric $G$ corresponds solving the constrained optimization problem 
\begin{equation*}
\begin{aligned}
&u^{(i+1)}   = \arg  \min_u  J[u] \\
&\mathrm{s.t. }  \quad  \frac{1}{2} G(u^{(i)} )[ u - u^{(i)} , u - u^{(i)} ]= \epsilon   \,  
\end{aligned} \, 
\end{equation*}
for small $\epsilon$ and then taking the limit $\epsilon \rightarrow 0$. For small $\epsilon$, one can expand $J[u] $ around $u^{(i)}$. Keeping only the first order term leads to a quadratic problem that can be solved with variational calculus. For RGD, we use that gradient descent corresponds to a steepest descent w.r.t. to the Euclidean metric. The result follows therefore by an identical computation with $G(u)$ replaced by the $L_2$ inner product. For the details, we refer to App. \ref{ssec:proof_prop_2}. 
\end{proof}
If the dynamic equation \eqref{eq:forward_equation} is obtained via moment closure, the summary statistics $\varphi$ will not correspond to a globally valid stochastic process. Thus, the gradient has to be understood
as an approximation as well. It is then advisable to check the results empirically by creating samples from the variational process with optimized control $u^*$.

\subsection{A Note on Implementation} \label{sec:general_purpose}

Solving the backward equation and computing the gradient updates requires the derivation of a number of model-specific functions. To reduce this overhead, we exploit the automatic differentiation capabilities of PyTorch which allows to effectively compute gradients and Jacobian-vector products. The most general version of our implementation only requires the specification of two functions: the r.h.s. of the forward equation $f$ and either $g$ or $L$. For certain subclasses, the implementation can be further simplified. In particular, we construct a general purpose method by fixing the control features and summary statistics as
\begin{equation} \label{eq:standard_method}
\begin{split}
T(x) &= (1, x_1, \ldots, x_n)^T \, ,\\
S(x) &=  (x_1, \ldots, x_n, x_1^2, x_1 x_2 , x_2^2, \ldots,  x_n^2 )^T  \, .
\end{split}
\end{equation}
Intuitively, the choice of features $T$ corresponds to a linear feedback control. The summary statistics $S$ consist of first and second order moments and thus directly correspond to the mean and covariance of the approximate posterior, which is in line with many approximate non-linear filtering techniques \cite{sarkka_2013}. 
Here, it is also convenient to represent the control in terms of functions $u_0$, $u_1$ such that we can write
\begin{align}
v(t) T(x) = u_0(t) + u_1(t) x \, . 
\end{align}
With $m(t) \equiv \mathsf E[Z_t] $ and $M(t) \equiv \mathsf E[(Z_t-m(t)) (Z_t-m(t))^T]$, we obtain from \eqref{eq:moment_equation}
\begin{equation*}
\begin{split}
&\dot m(t) = \mathsf E[ a(Z_t) ] + u_0(t) + u_1(t) m(t) ]  \\ 
&\dot M(t) = E[ a(Z_t) Z_t^T] + E[ Z_t a(Z_t)^T] + \mathsf E[ D(Z_t) ]  \\
&\quad + u_1(t) M(t) + M(t) u_1 (t)^T  \\
&\quad - \mathsf E[ a(Z_t) ]  m(t)^T - m(t) \mathsf E[ a(Z_t) ]^T 
\end{split}
\end{equation*} 
%For a clearer representation, we write the corresponding variational drift as 
%\begin{align} \label{eq:linear_control|}
%a^Z(x,t) = a(x) + v(t) T(x)  = a(x) + u_0(t) + u_1 (t) x \, . 
%\end{align}
%To write down the moment equations \eqref{eq:moment_equation},  $S$ and consider $m(t) := \mathsf E[ Z_t]$ and $M(t) := \mathsf E[ (Z_t - m(t) ) ( Z_t-m(t) )^T]$. From the general moment equation \eqref{eq:moment_equation} we get the joint system
%\begin{equation} \label{eq:mean_cov_equation}
%\begin{split}
%\dot m(t) &= \mathsf E[ a( Z_t) ] + u_0 (t) + u_1(t) m(t) \, , \\
%\dot M(t) & = \mathsf E[ a(Z_t) (Z_t-m(t) )^T ] + \mathsf E[ (Z_t- m(t) ) a(Z_t)^T ] \\
%&\quad  + \mathsf E[ D(Z_t) ] + u_1^T + u_1(t) M(t) + M(t) u_1(t)^T 
%\end{split}
%\end{equation}
%where we have suppressed the explicit time dependence of $a$ and $b$. In pratice, the coupled system \eqref{eq:mean_cov_equation} is implemented as a single ODE with only the unique components of the covariance matrix. 
Under the choice \eqref{eq:standard_method}, $f$ and $g$ can be constructed automatically by specifying $\mathsf E[ a( Z_t) ] $, $\mathsf E[ a(Z_t) Z_t^T]$ and  $\mathsf E[ D(Z_t) ]$ in terms of the first and second order moments.  This will typically require a moment closure. We include two standard closure schemes that lead to a reduction to moments of first and second order: a Gaussian closure for processes defined on the whole $\mathbb R^n$ and a log-normal closure for processes defined on $\mathbb R^n_+$ (see App. \ref{ssec:moment_closure}).
We conclude this section by commenting on the relation to the standard Gaussian process approximation \cite{archambeau_2007a, archambeau_2007}. As shown in App. \ref{ssec:vgpa}, by a suitable choice of the control features the GP approximation arises as a special case within our framework. 

\begin{algorithm}
    \caption{Robust Natural Gradient Descent for Moment-Based Variational Smoothing}
    \label{alg:ngd}
 \begin{algorithmic}[1]
    \STATE {\bfseries Input:} Initial guess $u^{(0)}$, initial condition $\varphi(0)$, learning rate $h$, step size modifiers $\alpha, \beta$.
    \FOR{i = 0, \ldots, \text{maxiter}} 
    \STATE Given $u^{(i)}$, $\varphi(0)$, compute $\varphi^{(i)}$ using \eqref{eq:forward_equation}.
    \STATE Given $u^{(i)}$, $\varphi^{(i)}$, compute $\eta^{(i)}$ using \eqref{eq:adjoint_equation}.
    \STATE Set $u'$ according to \eqref{eq:natural_gradient_step}.
    \IF{ $J[ u' ] < J[ u^{(i)} ] $ }
    \STATE  $h \rightarrow \alpha \cdot h$, $u^{(i+1)} \rightarrow u'$
    \ELSE 
    \STATE $h \rightarrow \beta \cdot h$, $u^{(i+1)} \rightarrow u^{(i+1)} $
    \ENDIF
    \ENDFOR
    \STATE {\bfseries Output:} Variational control $u^*$ .
 \end{algorithmic}
 \end{algorithm}

 \subsection{Online Variational Smoothing}

 The optimization based on Alg. \ref{alg:ngd} processes the full sequence of observations at once. This can be problematic for some dynamical systems as the initial estimate might be far away from the observations or when the variance of the prior process is very large. For such cases, we employ an online version of the variational smoother. For this online version, Alg. \ref{alg:ngd} is run for a number of steps on the first observation only. Then, the second observation is included and the smoother is initialized with the last control of the previous step. This procedure is repeated until all observations are processed.

\section{PARAMETER INFERENCE} \label{sec:inference}

Variational smoothing algorithms can be straightforwardly extended to inference of model parameters. Let $\theta$ be a collection of real-valued parameters and extend the prior model such that the drift and diffusion terms are understood as functions of $\theta$. More explicitly, replace $a(x) \rightarrow a(x, \theta)$ and $b(x) \rightarrow b(x, \theta)$ in the model given by \eqref{eq:sde}. We can now proceed along the line of Sec. \ref{sec:variational_smoothing} to derive a relaxed variational inference problem (see App. \ref{ssec:extended_control_problem}). Again the result can be phrased as a control problem
\begin{equation}  \label{eq:full_control_problem}
\begin{aligned}
\min_{\theta, u, \varphi} & &  & \int_0^T L( \theta,  u(t), \varphi(t) )\, \mathrm dt -\sum_{k=1}^n F_i(\varphi(t_k) ) \\
\mathrm{s.t. } & &  &\dot  \varphi(t) = f(\theta, u(t), \varphi(t) )    \,  \\   
\end{aligned} \, 
\end{equation}
Solving the control problem \eqref{eq:full_control_problem} is equivalent to maximizing an approximate evidence lower bound. We discuss three ways to solve \eqref{eq:full_control_problem}. In the first approach, $\theta$ and $u$ are optimized interchangeably corresponding to the usual variational expectation maximization framework.  The second idea is to construct a joint gradient descent in the parameters $\theta$ and the controls $u$. In practice, we observed that a combination of both approaches works well, where we alternately take a number of gradient steps for $\theta$ and $u$. \\
Finally, we consider a scenario were we have several independent time series samples $Y^1, \ldots, Y^N$ from the same underlying model. The standard variational inference procedure in this case requires computing $u^*_n(t)$ for each time series $Y^n$ to perform a single parameter update. This becomes intractable for larger data sets. We therefore consider an amortized approach based on an inference network. The idea is to model the controls as a parametric function of the observations. In our case, we set $u_n(t) = h( y_n, \phi )$  where $h$ is a feed-forward neural network parametrized by $\phi$. As shown in App. \ref{ssec:amortized_inference}, the corresponding optimization problem becomes 
\begin{equation}  \label{eq:autoencoder_control_problem}
\begin{aligned}
\min_{\theta, \phi, \varphi} & &  & \sum_{i=1}^N \int_0^T L( \theta,  h(y_i, \phi ), \varphi_i(t) )\, \mathrm dt -\sum_{k=1}^n F_i(\varphi_i(t_k) ) \\
\mathrm{s.t. } & &  &\dot  \varphi_i(t) = f(\theta, h(y_n), \varphi_i(t) )  \quad i = 1, \ldots, N   \,  \\   
\end{aligned} \, 
\end{equation}
For an implementation in PyTorch, we can exploit that our approach is gradient-based. Prop. \ref{thm:natural_gradient} allows us to compute the gradient of the objective function with respect to an arbitrary control $u(t)$.  We can thus backpropagate through the variational smoothing code such that it supports automatic differentiation. Conceptually, this is similar to neural ODE framework \citep{chen_2018} which allows to backpropagate through an ODE solver. Using the resulting module as the loss function, the inference network can be trained end-to-end using standard optimizers based on back-propagation. For a simple conceptual demonstration of the inference network, we refer to Sec. \ref{sec:ou_process}.

\section{EXPERIMENTS}  \label{sec:experiments}

In this section, we present four examples chosen to illustrate the versatility of our approach. For more details regarding the model equations and implementation, we refer to App. \ref{ssec:experiment_details}. 

\subsection{Regular Gradient vs. Natural Gradient}

We are interested in comparing the performance of the natural and regular gradient descent. We investigate this using the non-linear diffusion given by
\begin{equation} 
\mathrm d X_t = 4 X_t ( 1 - X_t^2) \mathrm dt + \sigma \mathrm d W_t
\end{equation}
that was also featured in the original work on Gaussian process approximations \cite{archambeau_2007a, archambeau_2007}. The drift of the system has two stable stationary points at $x = \pm 1$.  On occasion, the process noise may drive the system from one stationary point to another. We pick one fixed trajectory for which such a switch occurs. We then generate 10 different initial controls at random. For each of these initial controls, we perform the optimization with regular gradient descent and with natural gradient descent. The averaged log-transformed objective functions over gradient iterations are shown in  Fig. \ref{img:natural_gradient}. We observe that the natural gradient descent is more effective then the regular gradient descent, in particular in the middle part of the optimization. Also note that for small to medium dimensions, the computation time per gradient step is approximately equal for both methods. This is because the Fisher information is required for both (see Prop. \ref{thm:natural_gradient}).  Only for larger system, the matrix inversion in \eqref{eq:natural_gradient_step} may become prohibitive compared to the forward and backward ODE solution.

\begin{figure}[t]
\centering
\includegraphics[width=0.95\linewidth]{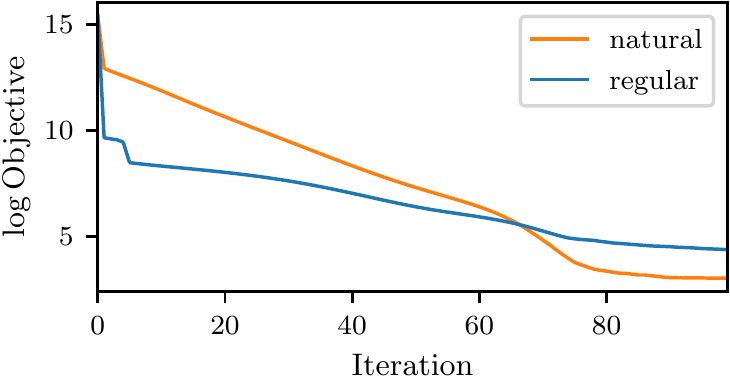}
\caption{Evolution of the objective function under natural gradient descent (red) and regular gradient descent (blue). The lines correspond to the log of the objective function averaged over 10 runs started with randomly initialized controls. }
\label{img:natural_gradient}
\end{figure}

\subsection{Joint Smoothing and Inference}

Geometric Brownian motion is a simple example of a process with a state dependent diffusion term and thus cannot be treated in the linear gaussian process framework. Here, we consider a simple multivariate extension given by the SDE system
\begin{equation} 
\mathrm d X_{i, t} = r_i X_{i, t} \mathrm d t + X_{i, t}  \mathrm d \tilde W_{i, t}
\end{equation}
\begin{figure}[t!]
\includegraphics[width=0.95\linewidth]{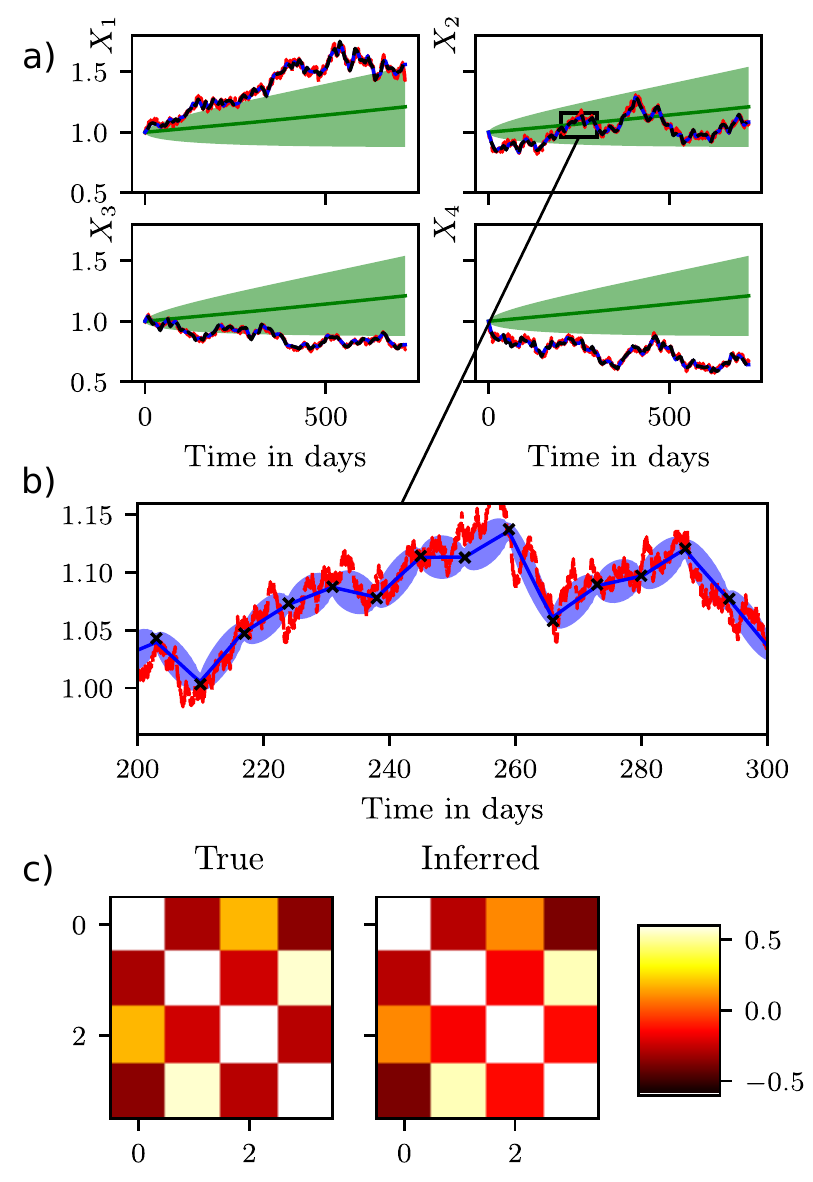}
\caption{Joint smoothing and inference for a multivariate geometric Brownian with $n=4$. a) Noisy observations, ground truth and smoothed state  for the process components. For comparison, we show the prior process initialized with uninformative parameters. The shaded region indicates the standard deviation of the prior. b) A  zoom-in showing the posterior compared to the noisy observations and the ground truth. The shaded blue region indicates the standard deviation of the variational posterior. c) Ground truth of the correlation matrix compared to reconstructed correlation matrix. }
\label{img:gbm}
\end{figure}
for each component $i$. Here $\tilde W_{i, t}$ is a collection of correlated Brownian motions. Similar as for a multivariate normal distribution, a correlated Brownian motion can be constructed as $\tilde W_t  = R W_t$ where $W_t$ is a vector of independent standard Brownian motions and the matrix $R$ encodes the correlations. We consider a noise-dominant scenario and thus treat $R$ as the parameter to be inferred. To test joint inference and smoothing, we simulated a trajectory over an interval of $[0, 720]$ with independent Gaussian observations every $7$ units.
For optimization, we use the alternating gradient descent. As demonstrated qualitatively in Fig. \ref{img:gbm}, state and correlation structure can be inferred quite well. Note that we show the correlation matrix $RR^T$ since many $R$ may give rise to the same process. The details of the experiment and a more quantitative evaluation are given in App. \ref{ssec:joint_inference_smoothing}.

\subsection{Amortized Smoothing} \label{sec:ou_process}

We explore the possibility of amortized smoothing (Sec. \ref{sec:inference}). To keep it simple, we consider a two-dimensional Ornstein-Uhlenbeck process given by the SDE
\begin{align*}
\mathrm dX_t = -\gamma (X_t - \mu) \mathrm dt + \sigma \mathrm dW_t   \, .
\end{align*}
where $\mu \in \mathbb R^2$, $\gamma, \sigma \in \mathbb R^{2 \times 2}$. We generated 1000 trajectories of a two-dimensional model with fixed parameters and initial conditions. Each sample was observed over 20 $\mathrm{s}$  with 9 evenly spaced observation. The inference network was trained over 50 epochs using the Adam optimizer with default parameters, a weight decay of $0.001$ and a batch size of one. Fig. \ref{img:ou_process} shows the prediction of the smoothing network on a previously unseen sample compared to the exact solution. This demonstrates, in principle, that the controls for variational smoothing can be learned and that the inference network generalizes to unseen trajectories. 

\begin{figure}[ht]
\centering
\includegraphics[width=0.95\linewidth]{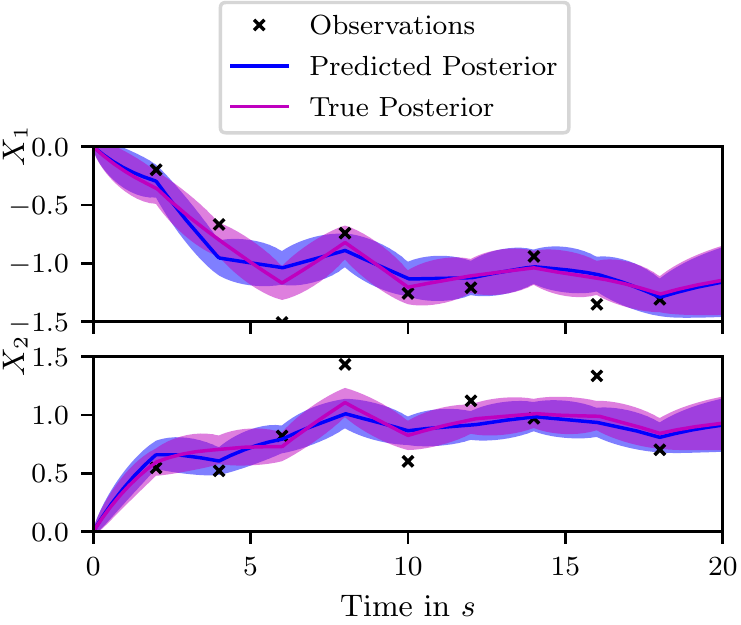}
\caption{ Two-dimensional Ornstein-Uhlenbeck process. For both components, the graph shows the smoothing predicted by the inference network on a previously unseen example. For comparison, we also show the simulated ground truth and the true posterior. Shaded regions indicate the standard deviations of the corresponding process.}
\label{img:ou_process}
\end{figure}

\subsection{Population Models}

Population models describe the time evolution of a number of species over time. A convenient way to represent a population model is via the language of chemical reactions. More precisely, let there be species $\mathcal X_1, \mathcal X_d$ and $r$ reactions of the form
\begin{equation*}
s_{i1} \mathcal X_1 + \ldots + s_{id} \mathcal  X_d \ \xrightarrow{c_i} \ p_{i1} \mathcal X_1 + \ldots + p_{id}  \mathcal X_d
\end{equation*}
with the matrices $S$ and $P$ encoding the number of molecules before and after a certain reaction event and the rate constants $c_i$ determining the time scale of each reaction. 
In addition, let $V \equiv P-S$. Then the $j$-th row $v_j$ of $V$ encodes the net change caused by reaction $j$. Under certain conditions, the concentrations of the species is governed by the chemical Langevin equation \cite{gillespie_2000}. This leads to an SDE of the form
\begin{align} \label{eq:chemical_langevin}
\mathrm d X_t =  V^T h(X_t) \mathrm \mathrm dt + \sqrt{ V^T \mathrm{diag} ( h (X_t ) )V  } \mathrm d W_t 
\end{align}
where $\sqrt{ \cdot }$ indicates a matrix square root and the mass-action propensity $h: \mathbb R^d \rightarrow \mathbb R^r$ is defined component wise by 
\begin{equation*}
h_i (x) = c_i \prod_{k=1}^d \frac{x_k !}{s_{ik}! (x_k - s_{ik})!} .
\end{equation*}
We combine a linear control and a multivariate log-normal closure to derive a general method for \eqref{eq:chemical_langevin} (see  App. \ref{ssec:population_models}). As a test system, we use the stochastic Lotka-Volterra model that describes the interaction of a prey species and a predator species. The corresponding matrices are given by 
\begin{equation*}
S = \begin{pmatrix}
1 & 0 \\
1 & 1 \\
0 & 1 
\end{pmatrix} \quad , \quad
P = \begin{pmatrix}
2 & 0 \\
0 & 2 \\
0 & 0 
\end{pmatrix}
\end{equation*}
We stress, however, that our code is not specific to the predator prey dynamics but takes general $S$ and $P$ as input. To study the behavior of our approach we recreate a scenario from \citet{ryder_2018}. We generate a synthetic trajectory starting from the initial $X_0 = (71,79)^T$ and take four observations within the interval $[0, 50]$. As shown Fig. \ref{img:lotka_volterra}, the variational smoothing can reconstruct the true trajectory quite accurately. We also observe that only four observations restrict the variance of the process significantly.

\begin{figure}[ht]
\centering
\includegraphics[width=0.95\linewidth]{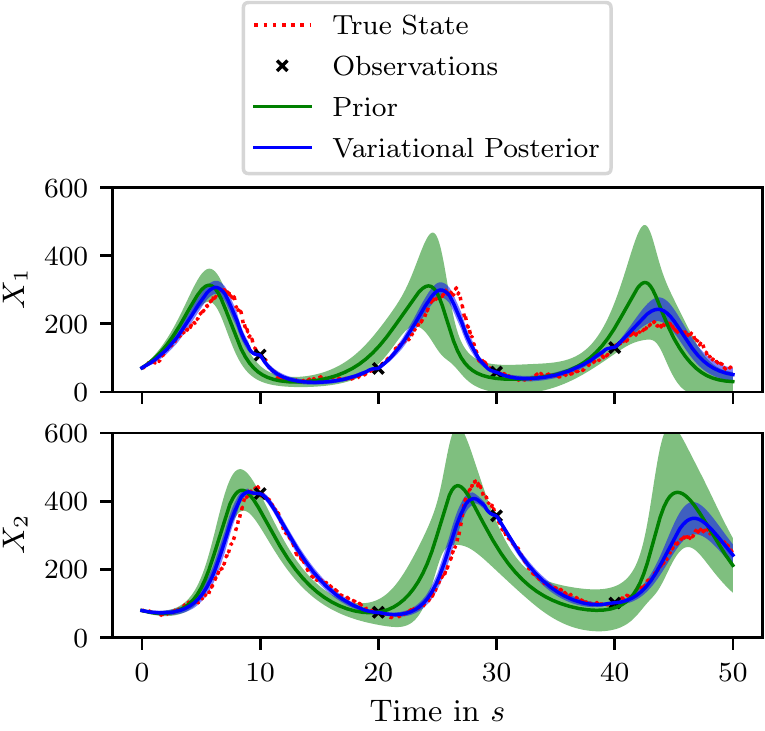}
\caption{Smoothing for the stochastic Lotka-Volterra system. Solid lines indicate the mean, the  shaded area indicates the corresponding standard deviation. Results are shown for the prior process and the variational smoothing. For reference, the simulated ground truth and the noisy observations are also provided.}
\label{img:lotka_volterra}
\end{figure}

\section{DISCUSSION}

We provide an ODE-based approach to variational smoothing that extends classical Gaussian process regression to models with state-dependent diffusion and allows for more versatile variational families. To achieve this, we understand the variational process as a controlled modification of the prior process and project the marginal posterior to a set of selected moment functions. In comparison to earlier work, we apply a refined optimization algorithm based on the natural gradient descent. 
Conceptually, our work extends a previous moment-based variational approach from MJPs to SDEs. Due to the structural similarity of both approaches, the moment-based variational method provides a unified inference framework for both process classes. In interesting future direction is to extend the moment-based variational framework to other Markov processes, in particular to jump-diffusions. 
In this work, we have used two simple closure schemes that work sufficiently well in the considered examples. Future work may consider more advanced clsoure schemes and also investigate the effect of different closures on the inference quality.  \\
While previous ODE-based approaches have required manual derivations of the backward equation and gradients with respect to the parameters, we exploit automatic differentiation to construct these quantities automatically. In general, our approach only requires to provide two model-specific functions. For certain subclasses, these functions can be constructed automatically as well. 
Since our method is gradient-based, it can implemented as an automatically differentiable function. This allows straightforward integration with deep models. As a first example, we train an amortized inference network on a toy model with known model parameters. A promising future direction is to extend this to a full variational autoencoder for time series.

\section*{Acknowledgements}

This work was supported by the European Research Council (ERC) within the CONSYN project, grant agreement number 773196.

% References follow the acknowledgements.  Use an unnumbered third level
% heading for the references section.  Please use the same font
% size for references as for the body of the paper---remember that
% references do not count against your page length total.
% \twocolumn
% \newpage

\FloatBarrier

% References
\bibliography{library}

% appendix 
\newpage 
\onecolumn
\appendix

\def\theequation{A\arabic{equation}}
\def\thesection{A\arabic{section}}
\setcounter{equation}{0}

% redefine labels
\def\theprop{A\arabic{prop}}
\setcounter{prop}{0}
\def\thethm{A\arabic{thm}}
\def\thelemma{A\arabic{lemma}}
\setcounter{lemma}{0}

\section{Proofs and Derivations}

\subsection{Proof of Lemma 1} \label{ssec:proof_lemma_1}

\paragraph{Step 1}
First, we derive representation \eqref{eq:quadratic_form}. We start from the proposed variational drift \eqref{eq:stat_control} with identity rescaling, i.e. 
\begin{align} \label{seq:stat_control}
a^Z(x, t) = a^X(x) + v(t) T(x) \, .
\end{align}
with $v:[0,T] \rightarrow \mathbb R^{n \times m}$ and $T: \mathbb R^n \rightarrow R^m$. If we denote the columns of $v$ by $v_1, \ldots, v_m$, the control part of Eq. \eqref{seq:stat_control} may be written as
\begin{align} \label{seq:stat_control_expanded}
 v(t) T(x) = \sum_{i=1}^m v_i(t) T_i(x)
\end{align}
where $T_i$ represents the components of the vector $T$. 
%In the main text, we also introduced the vectorized form of $v$. Here, we can define $u: \mathbb [0,T] \rightarrow \mathbb R^{n \times m}$ a bit more precisely bi its components $u_i: [0,T] \rightarrow \mathbb R$ with
%\begin{align} \label{seq:vectorized_control}
%u_{(i-1)m + j } (t) = v_{i, j} (t)  \, , \quad i = 1, \ldots, n \, , \quad  j = 1, \ldots, m \, . 
%\end{align}
The drift \eqref{seq:stat_control} induces a family of Ito processes parametrized by the deterministic, time-dependent function $v$ given by the SDE
\begin{align} \label{seq:variational_family}
\mathrm d Z_t = \left( a^X(Z_t) + v(t) T(Z_t)  \right)  \mathrm dt + b(Z_t) \mathrm d W_t \, . 
\end{align}
Inserting the control \eqref{seq:stat_control} into the objective function \eqref{eq:objective_function}, the prior drift $a^X$ cancels and the divergence between variational process and prior becomes
\begin{align} \label{seq:variational_family_kl}
 D_\mathrm{KL}[ P^Z \, || \, P^{ X} ]  =  \frac{1}{2}\int_0^T \mathsf E \left[ T(Z_t, t)^T v (t)^T D(Z_t, t)^{-1} v(t) T(Z_t, t) \right] \mathrm dt
\end{align}
To proceed, let us rewrite the tensor contraction within the expectation above using the expanded form of the control \eqref{seq:stat_control_expanded}.  For clarity, we also suppress the arguments and get
\begin{equation} \label{seq:tensor_contraction}
T^T v^T D^{-1} v T= \sum_{i, j}  v_{i} T_i  D^{-1} T_j v_{j} 
\end{equation}
%using Einstein sum convention. For clarity, we also suppress the arguments and get
%\begin{equation} \label{seq:tensor_contraction}
%T^T v^T D^{-1} v T= T_i v_{ji} D^{-1}_{jk} v_{kl} T_l  = v_{ji} v_{kl} T_i D_{jk}^{-1}  T_l
%\end{equation}
Now let us define a block matrix  function $\psi: \mathbb R^n \rightarrow  \mathbb R^{nm \times nm}$  and a vector $u \in \mathbb R^{nm}$ as
\begin{align*}
\psi(x) = \begin{pmatrix}
T_1 T_1 D^{-1} & \ldots & T_1T_m   D^{-1} \\
\vdots & \cdots & \vdots \\
T_m T_1 D^{-1} & \ldots, & T_m T_m D^{-1}
\end{pmatrix} \, , \quad 
u = \begin{pmatrix}
v_1 \\
\vdots \\
v_m
\end{pmatrix} \, .
\end{align*}
With this, the tensor contraction  \eqref{seq:tensor_contraction} can be written as
\begin{align} \label{seq:obj_quadratic_form}
T^T v^T D^{-1} v T = u^T \psi u \, . 
\end{align}
Now observe that only the elements of $\psi$ depend on the stochastic process $Z_t$. Consequently, we arrive at the following form of the divergence
\begin{align} \label{seq:kl_g}
D_\mathrm{KL} [ P^Z \, || \, P^{ X} ]  =  \frac{1}{2}\int_0^T u(t)^T \mathsf E[ \psi(Z_t) ] u(t)  \mathrm dt \, .
\end{align}
This function contains expectations of the form $\mathsf E[ T_i(Z_t) T_j(Z_t) D^{-1}_{kl} (Z_t) ) ]$. By augmenting the summary statics $S$ with these quantities, we can find a function $g$ such that $\psi(t) = g( \varphi(t) ) $. While this choice is always possible, there may often be more convenient ways to construct $g$. Using $g$, we can write 
\begin{align}
D_\mathrm{KL}[ P^Z \, || \, P^{ X} ]  =  \frac{1}{2}\int_0^T u(t)^T g( \varphi (t) )  u(t) \mathrm dt \, .
\end{align}
as given in \eqref{eq:quadratic_form}. 

\paragraph{Step 2} 
 What is left to show is that \eqref{seq:kl_g} is a quadratic form in $u$.  For this, we have to show that the matrix $\psi(t)$ is symmetric positive semi-definite for almost every $t \in [0, T]$. 
The function $\psi$ is symmetric by construction. Consequently, $E [\psi (Z_t )]$ is symmetric. Now fix $u \in \mathbb R^{nm}$ and let $v$ be the corresponding matrix representation. Set
\begin{align*}
H(v, t) = T(x,t)^T v(t)^T D^{-1} (x, t) v(t) T(x, t) \, . 
\end{align*}
Then $H(v, t) \geq 0$ for almost all $x \in \mathbb R^n$ and $t \in [0,t]$. To see this, we understand $v(t) T(x,t)$ as a vector in $\mathbb R^n$. The result follows because $D(x,t)$ is p.s.d. almost everywhere by assumption. Here, the strict definiteness is lost in general. For example, if $m>n$ we may find $v$ such that $v(t) T(x,t)=0$.  Now since $H$ and $A$ represent the same quadratic form, $A$ has to be positive semi-definite as well. 

\paragraph{Step 3}

So far, we have not considered rescaling. Now let $R: \mathbb R^n \rightarrow R^{n \times n}$ be an invertible matrix valued function and consider the rescaled controlled drift
\begin{align*}
a^Z(x,t) = a^X(x) + R(x) v(t) T(x,t) \, . 
\end{align*}
Evaluating the prior contribution to the divergence yields
\begin{equation} \label{sseq:rescaled_variational_family_kl}
 D_\mathrm{KL}[ P^Z \, || \, P^{ X} ]  =  \frac{1}{2}\int_0^T \mathsf E \left[ T(Z_t)^T v (t)^T \underbrace{R(Z_t)^T D(Z_t, t)^{-1} R(Z_t)}_{\equiv \tilde D^{-1}( Z_t) } v(t) T(Z_t) \right] \mathrm dt  \, . 
\end{equation}
We can now directly repeat steps 1 and 2 by replacing $D(x) \rightarrow \tilde D(x)$ in the construction of $g$.

\subsection{Proof of Proposition 1}  \label{ssec:proof_prop_1}

The full variational inference problem for the proposal class parametrized by $u$ is given by
\begin{equation}  \label{seq:variational_problem}
\min_{u}    \frac{1}{2}\int_0^T u(t)^T g( \varphi (t) )  u(t) \, \mathrm dt - \sum_{k=1}^n \mathsf E[ \log p(y_k \mid Z_{t_k} ) ] + \log C \,  . 
\end{equation}
Eq. \eqref{seq:variational_problem} is a direct consequence of \eqref{eq:objective_function} and Lemma \ref{thm:quadratic_form} and may be understood as a stochastic control problem. We now assume that the expected log-likelihood may be expressed in terms of the expected summary statistics $\varphi$, i.e.  $\mathsf E[ \log p(y_k \mid Z_{t_k} ) ] = F( \varphi (t_k) ) $  . We also ignore the evidence $\log C$ which is independent of $u$. This leads to the streamlined representation
\begin{equation}  \label{seq:variational_problem_simplified}
\min_{u}    \frac{1}{2}\int_0^T u(t)^T g( \varphi (t) )  u(t) \, \mathrm dt - \sum_{k=1}^n  F( \varphi(t_k)  ) \,  . 
\end{equation}
The objective function in \eqref{seq:variational_problem_simplified} corresponds to the negative ELBO for the proposal family parametrized by $u$. Unfortunately, the simple appearance of \eqref{seq:variational_problem_simplified} is deceiving since $\varphi(t) = \mathsf E[ S(Z_t) ] $ implicitly depends on $u$. Now recall that $\varphi(t)$ obeys an evolution equation of the form
\begin{align*}
\frac{d}{dt} \varphi(t) = \mathsf E [ A^\dagger S(Z_t) ] \,. 
\end{align*}
We can now convert \eqref{seq:variational_problem_simplified} into a constrained problem
\begin{equation}  \label{seq:variational_problem_constrained}
\begin{aligned}
\min_{u, \varphi} & &  & \frac{1}{2}\int_0^T u(t)^T g( \varphi (t) )  u(t) \, \mathrm dt -\sum_{k=1}^n F_i(\varphi(t_k) ) \\
\mathrm{s.t. } & &  &\dot  \varphi(t) =  \mathsf E [ A^\dagger S(Z_t) ]     \,  \\   
\end{aligned}  \, . 
\end{equation}
We emphasize that \eqref{seq:variational_problem_constrained} is an equivalent representation of the variational problem and does not contain any approximations beyond the choice of the variational family. This also means that it has the full complexity of the original stochastic control problem \eqref{seq:variational_problem}. In order to obtain a more tractable problem, we use a moment closure on the constraint to get an ODE of the form   
\begin{align} \label{seq:forward_equation}
\frac{d}{dt}  \varphi(t) = f(u, \varphi(t) ) \, . 
\end{align}
 With this moment closure relaxation,  the variational problem \eqref{seq:variational_problem_constrained} reduces to the deterministic control problem given in the main text. 
 
\paragraph{Comment} 
In general, when a moment closure is employed, there is no global Markov process corresponding exactly to the closed moment equations. Then, the objective function is not a true lower bound of the evidence but an approximate lower bound. Such behavior is well-known for other structured variational approximation, e.g. for cluster variational methods \cite{yedidia_2000, wainwright_2008}. Since moment closure introduces an additional approximation, results of the moment-based variational inference have to be checked empirically. However, we do not see this as a major problem since such empirical verification is required for all forms of variational inference anyway. 

\subsection{Proof of Lemma 2} \label{ssec:proof_lemma_2}

Consider to stochastic processes $Z$ and $Z'$ over an interval $[0,T]$ that are members of the variational family as defined by the drift and Eq. \eqref{eq:stat_control}.  Let  $Z$ and $Z'$ be parametrized by $u$ and $u'$, respectively. 
Inserting $Z$ and $Z'$ into the general path divergence of diffusion processes \eqref{eq:path_kl} , we get
\begin{align*}
D_\mathrm{KL}[ Z \, || \, Z' ] = \frac{1}{2} \int_0^T \mathsf E \left[ T(Z_t)^T \left( v'(t) - v(t) \right)^T D^{-1} (Z_t, t)      \left( v'(t) - v(t) \right) T(Z_t) \right] dt
\end{align*}
This expression is the same as \eqref{seq:variational_family_kl} with $v$ replaced by $v-v'$.  Therefore, we may follow along the same lines as in Sec. \ref{ssec:proof_lemma_1} and obtain
\begin{align*}
D_\mathrm{KL}[ Z \, || \, Z' ] = \frac{1}{2} \int_0^T \left( u'(t) - u(t) \right)^T g( \varphi (t) )   \left( u'(t) - u \right) dt \, . 
\end{align*}
 To get the suggested representation, set $G: \mathcal U \times \mathcal U \rightarrow \mathbb R$ as
\begin{align*} 
G(u)[ u'-u, u'-u] = \int_0^T \left( u'(t) - u(t) \right)^T g(\varphi  (t) )  \left( u'(t) - u \right) dt \, . 
\end{align*} 
What is left is to verify the properties of $G$. Bilinearity follows immediately from the above definition. The symmetry and positive definiteness follow from Lemma \ref{thm:quadratic_form}. 

\subsection{Proof of Proposition 2} \label{ssec:proof_prop_2}

Consider the variational problem of the main text given by
\begin{align} \label{seq:functional_opt}
u^* = \arg \min_u J[u] \, . 
\end{align}
Here, we understand $J$ as a functional of the controls $u$ only. This is possible because $u$ together with an initial condition fully defines the moments $\varphi$. We would like to perform a steepest descent that respects the metric of the manifold on which $u$ lives. We therefore replace \eqref{seq:functional_opt} by the sequential optimization problem 
\begin{align*}
u^{(i+1)} = \arg \min_u J[u]  \, , \quad \mathrm{s. t.} \quad \frac{1}{2} G(u^{(i)}) ( u - u^{(i)} ,  u - u^{(i)}) = \epsilon \, . 
\end{align*}
The following calculation is inspired by the discussion of neighboring optimal solutions in \citet{stengel_1994}. For sufficiently small $\epsilon$, we may linearize $J$ around the current estimate $ u^{(i)}$. Enforcing the constraint via a Lagrange multiplier, we obtain the unconstrained functional 
\begin{align*}
J' [ u , \lambda] = J[ u^{(i)} ] + \delta J [u^{(i)}, u - u^{(i)}]  + \lambda \left (  \frac{1}{2} G(u^{(i)}) ( u - u^{(i)} ,  u - u^{(i)}) - \epsilon \right) \, . 
\end{align*}
Here $\delta J$ denotes the Gateaux derivative of $J$. For simplicity, set $\delta u = u - u^{(i)}$. We consider $\delta u$ as a small perturbation of $u^{(i)}$ and denote as $\delta \varphi$ the deviation from $\varphi^{(i)}$ induced by $\delta u$. It turns out that $\delta \varphi$ satisfies the linearized forward equation
\begin{align} \label{seq:forward_linear}
\dot{ \delta \varphi(t) } = f_{\varphi }( u^{(i)} (t), \varphi^{(i)} (t) ) \delta \varphi(t) +   f_{u}( u^{(i)} (t), \varphi^{(i)} (t) ) \delta u \, . 
\end{align}
  The linearized contribution $\delta J$ is given by
\begin{align*}
 \delta J [u^{(i)}, u - u^{(i)}]  = \int_0^T  L_\varphi ( u^{(i)}(t),   \varphi^{(i)} (t) ) \delta \varphi(t) \mathrm d t +  \int_0^T  L_u ( u^{(i)}(t),   \varphi^{(i)} (t) ) \delta u(t) \mathrm d t \, 
\end{align*}
where $\varphi$ is understood as a function of $\delta u$ determined by \eqref{seq:forward_linear}. We therefore obtain the objective function
\begin{equation} \label{seq:objctive_linear}
\begin{split}
J' [ u , \lambda] &=   \int_0^T  L_\varphi ( u^{(i)}(t),   \varphi^{(i)} (t) ) \delta \varphi(t) \mathrm d t +  \int_0^T  L_u ( u^{(i)}(t),   \varphi^{(i)} (t) ) \delta u(t) \mathrm d t  \\
&\quad + \frac{\lambda}{2} \int_0^T \delta u(t)^T \psi( \varphi^{(i)} (t) )  \delta u (t) \mathrm d t + \mathrm{const} \, . 
\end{split}
\end{equation}
that we have to minimize subject to the constraint \eqref{seq:forward_linear}. We can now follow the standard variational procedure to obtain an adjoint equation
\begin{align} \label{seq:adjoint_linear}
\dot \eta (t)  = L_\varphi ( u^{(i)}(t),   \varphi^{(i)} (t) ) - f_{\varphi }^T( u^{(i)} (t), \varphi^{(i)} (t) ) \eta(t) 
\end{align}
that satisfies the reset conditions
\begin{equation} \label{seq:adjoint_reset}
\eta(t_k^- ) = \eta(t_k^+) + \frac{d}{d\varphi } F(  \varphi(t_k) )  , \quad  k = 1, \ldots, n
\end{equation}
at the observation times. In addition, we obtain the algebraic constraint
\begin{align} \label{seq:constraint_linear}
0 =  L_u ( u^{(i)}(t),   \varphi^{(i)} (t) )  - f_{u}( u^{(i)} (t), \varphi^{(i)} (t) )  \eta(t) + \lambda \psi( \varphi^{(i)} (t) ) \delta u(t)   \, . 
\end{align}
In contrast to the non-linearized problem, the stationarity conditions decouple in this case. This means we can solve for the controls explicitly. Denoting the solution of \eqref{seq:adjoint_linear} as $\eta^{(i)}$ the solution can be expressed as
\begin{align} \label{seq:constraint_solution}
\delta u^{(i)}(t) = -\frac{1}{\lambda} g^{-1}( \varphi^{(i)} (t) ) \left( L_u ( u^{(i)}(t),   \varphi^{(i)} (t) ) -  f_{u}( u^{(i)} (t), \varphi^{(i)} (t) )  \eta^{(i)} (t) \right) 
\end{align}
Now we also know that $L_u ( u^{(i)}(t),   \varphi^{(i)} (t) )  = g( \varphi^{(i)} (t) ) u^{(i} (t)$. Thus, we get the natural gradient update steps as
\begin{align} \label{seq:natural_gradient_step}
u^{(i+1)} (t) = u^{(i)} (t) + \delta u^{(i)}(t)  =  u^{(i)} (t)  - h \left( u^{(i)} (t) -  \psi^{-1}( \varphi^{(i)} (t) )   f_{u}( u^{(i)} (t), \varphi^{(i)} (t) )  \eta^{ (i) }  (t) \right) \, . 
\end{align}
Here, we also introduced the step size $h= \frac{1}{\lambda}$ that is determined by $\epsilon$. To recover the regular gradient, we use that gradient descent is a steepest descent with respect to the Euclidean metric. In our function space setting, the natural analogue is the $L_2$ norm. Therefore, gradient descent is obtained by repeating the above calculations for
\begin{align*}
G(u) (u'-u, u'-u) = \int_0^T (u'(t)-u(t))^T (u'(t)-u(t)) \, \mathrm{dt} \, .
\end{align*}
Thus, the only thing we have to change is replacing $g^{-1}$ in \eqref{seq:constraint_solution} with the identity matrix. 

\paragraph{Comments}
In a typical application, we will initialize the descent with all controls set to zero. This setting recovers the prior process. Intuitively, we can see the natural gradient descent as a smooth transition from the prior process to the (locally) best posterior approximation within the variational family. We note that due to moment closure, we only have access to approximate moments $\varphi$. Therefore, the natural gradient is also an approximation to the true natural gradient. 

\subsection{Recovering the Gaussian Process Approximation} \label{ssec:vgpa}

For this section, we assume that the diffusion term $b$ does not depend on the state. The Gaussian process approximation only requires first and second order moments. We therefore choose 
\begin{align*} 
S(x) = (x_1, \ldots, x_n, x_1^2, x_1 x_2 , x_2^2, \ldots,  x_n^2 )^T \, . 
\end{align*}
The GP approximation is defined by a linear time-dependent drift. In order to recover this within our framework, we need to find $T$ such that
\begin{align*}
a^Z(x,t) = a(x) + v (t) T(x) \stackrel{!}{=}  u_0(t) + u_1 (t) x
\end{align*}
where $u_0: [0,T] \rightarrow \mathbb R^n$ and $u_1: [0,T] \rightarrow \mathbb R^{n \times n}$.  Now consider the choice
\begin{align*}
T(x,t) = \begin{pmatrix}
1 \\
x \\
a(x, t) 
\end{pmatrix}  \, , \quad
v(t) = \begin{pmatrix}
u_0(t) & u_1(t) & u_2 (t)  
\end{pmatrix}
\end{align*}
understood as block matrix notation with $u_2 : [0, T] \rightarrow \mathbb  R^{n \times n}$. This will lead to a variational drift of the form
\begin{align*}
a^Z(x,t) =  a(x) + u_0(t) + u_1 (t) x  + u_2(t) a(x)  \, . 
\end{align*}
If we fix $u_2$ to the constant function with output minus one, the prior drift will cancel and we have constructed a linear GP.  Using the general moment equation \eqref{eq:moment_equation} with the drift $a^Z$, we can now derive the moment equations for $\varphi(t) = \mathsf E[ S(Z_t) ] $. Represented in terms represented in terms of $m$, $\bar M$ we get
\begin{equation} \label{seq:lgp_moments}
\begin{split}
\dot m(t) &= u_0(t) + u_1(t) m(t) \, , \\
\dot{ \bar M}(t) & = u_1(t) \bar M(t) + \bar M(t) u_1(t)^T  + D  \, . 
\end{split}
\end{equation}
These equations are the standard equations for mean and variance of a linear GP. Eq. \eqref{seq:lgp_moments} defines the forward function $f$ required for implementation in our framework. The second function required is $L$ or $g$. Here, $L$ is a bit more convenient and is given by
\begin{align*}
L( u(t), \varphi(t) ) = \mathsf E[ (u_0(t) + u_1 (t) x  -  a(Z_t) )^T D^{-1}  (u_0(t) + u_1 (t) Z_t  -  a(Z_t) ) ]\, . 
\end{align*}
After a few algebraic multiplications, we observe that the only model dependent quantities required for $L$ are $\mathsf E[a(Z_t]$, $\mathsf E[a(Z_t) Z_t^T] $ and $\mathsf E[ a(Z_t) a(Z_t)^T]$ expressed as functions of $m$ and $\bar M$.

\section{Additional Information}

\subsection{Moment Closure Approximations} \label{ssec:moment_closure}

In the main part, we discussed how two obtain moment equations for Markov processes and gave a general idea on moment closure. Here, we will discuss strategies to obtain such a closure scheme, i.e. how to find the function $h$ such that we can proceed from \eqref{eq:moment_dynamics} to \eqref{eq:closed_dynamics}.  We focus on distributional closures that correspond to a projection onto a given parametric family \citep{bronstein_2018}. A distributional closure is constructed by picking a parametric proposal distribution $q_{ \phi}$ on the state space $\mathcal X$. To obtain a closure scheme, the first step is to find $\phi(t)$ such that
\begin{align} \label{seq:distributional_parameter}
\mathsf E_{\phi(t) } [ R( X) ] = \varphi (t)  \, , 
\end{align}
 where $\varphi(t) = \mathsf E[S(Z_t) ]$ are the expected summary statistics used to approximate the process. Eq. \eqref{seq:distributional_parameter} defines a valid moment closure when the conditions of the implicit function theorem are satisfied. Assuming we have obtained $\varphi(t)$, we can evaluate
\begin{align} \label{seq:distributional_closure}
  h( \varphi (t) ) \equiv \mathsf E_{\phi(t) } [ R( X) ] \, ,
\end{align}
where the expectation is taken with respect to $q_{\phi(t)}$.

It has also been shown that moment closure tends to work better if the support of the proposal distribution matches the support of the target process. Here, we focus on two simple probabilistic closures: the multivariate normal closure for processes defined on $\mathbb R^n$ and the multivariate log-normal closure for processes defined on $\mathbb R_+^n$. Another advantage of these two schemes is that they correspond directly to first and second order moments and may thus be used as a starting point before investigating more advanced schemes. To simplify the presentation, we denote the first order moments as $m \equiv \mathsf E[X]$,  the second order moments as $M \equiv \mathsf E[ X X^T]$ and the second order central moments as $\bar M \equiv \mathsf E[ (X-m) ( X-m)^T ]$. We write general powers in multi-index form, i.e.
\begin{align*}
X^\alpha \equiv \prod_{i=1}^n X_i^{\alpha_i}  \, . 
\end{align*}
This will be useful to represent general power moments. We also define $ k \equiv \sum_{i=1}^n \alpha_i $ as the order of the $\alpha$-th multi-moment.

\paragraph{Multivariate Normal Clouse}
Let $X \sim \mathcal N(\mu, \Sigma)$ be a multivariate normal random variable on $\mathbb R^n$. Since the distribution is fully specified by the man $\mu$ and the covariance $\Sigma$, all moments of the form $\mathsf E[ X^\alpha ] $ with $\alpha \in \mathbb N^n$ can be expressed as functions of $\mu$ and $\Sigma$. One way to obtain such a representation is via Isserlis' theorem \citep{isserlis_1918}. We will follow an alternative approach via the moment generating function.
\begin{lemma} \label{sthm:normal_closure}
  Let $X$ be as above and $\alpha \in \mathbb R^n$ a multi-index. Then 
  \begin{align*}
    \mathsf E[ X^\alpha] =  \left. \prod_{i=1}^n \frac{\partial^{\alpha_i}}{\partial s_i^{\alpha_i}} M(s) \right|_{s=0}
  \end{align*}
  where
  \begin{align*}
    M(s) = \mathsf E[ \exp( s^T X)] = \exp \left( \mu^T s + \frac{1}{2} s^t \Sigma s \right)
  \end{align*}
  is the moment generating function of $\mathcal N(\mu, \Sigma)$.
\end{lemma}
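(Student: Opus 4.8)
The plan is to separate the claim into two independent pieces: a general identity expressing moments as derivatives of the moment generating function at the origin, valid for any random variable whose MGF is finite near $s=0$, and the explicit closed form of the Gaussian MGF obtained by completing the square. Neither piece relies on the other, so I would prove them in turn and combine at the end.

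First I would establish the moment identity. Starting from $M(s) = \mathsf E[\exp(s^T X)]$, the key observation is that differentiating $\exp(s^T X)$ with respect to $s_i$ brings down a factor $X_i$. Applying $\partial^{\alpha_i}/\partial s_i^{\alpha_i}$ for each coordinate and using that mixed partials commute gives
\[
\prod_{i=1}^n \frac{\partial^{\alpha_i}}{\partial s_i^{\alpha_i}} \exp(s^T X) = X^\alpha \exp(s^T X),
\]
where $X^\alpha = \prod_i X_i^{\alpha_i}$ in the multi-index notation already introduced. Taking expectations on both sides and evaluating at $s = 0$, where $\exp(0) = 1$, yields exactly $\mathsf E[X^\alpha]$, which is the asserted formula once the order of differentiation and expectation has been exchanged.

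The single point requiring care — and the main, if modest, obstacle — is justifying that the derivatives may be moved inside the expectation. I would argue this by dominated convergence: since the Gaussian MGF is finite on all of $\mathbb R^n$, for $s$ restricted to a bounded neighborhood of any fixed point the difference quotients of the integrand are dominated by an integrable envelope of the form $c\,(1+\|X\|)^{k}\exp(s_0^T X)$, whose integrability follows from the finiteness of all Gaussian moments. This licenses differentiating under the integral sign repeatedly. Equivalently, one may simply note that $M$ is entire, so termwise differentiation of its power series is immediate.

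Finally I would compute the explicit form of $M$ by completing the square. Substituting $y = x - \mu$ in the Gaussian density integral turns the exponent $s^T x - \frac{1}{2}(x-\mu)^T \Sigma^{-1}(x-\mu)$ into $\mu^T s + s^T y - \frac{1}{2} y^T \Sigma^{-1} y$, and collecting the $y$-terms as $-\frac{1}{2}(y - \Sigma s)^T \Sigma^{-1}(y - \Sigma s) + \frac{1}{2} s^T \Sigma s$ leaves a shifted Gaussian that integrates to one against the normalization constant. What remains is the prefactor $\exp\bigl(\mu^T s + \frac{1}{2} s^T \Sigma s\bigr)$, which is the stated MGF. Inserting this closed form into the moment identity from the previous step completes the proof.
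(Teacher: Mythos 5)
Your argument is correct, but be aware that the paper offers no proof of this lemma at all: it is stated as a standard fact about moment generating functions and is immediately followed by the remark that the closure relations are generated automatically with Sympy. So there is nothing in the paper to compare against, and your write-up supplies exactly the omitted textbook details in the standard way: (i) the moment identity via differentiation under the expectation, and (ii) the closed form of the Gaussian MGF by completing the square, with the substitution $y = x - \mu$ and the shifted-Gaussian integral handled correctly.

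One small repair on the only delicate step. The dominating envelope you propose, $c\,(1+\|X\|)^{k}\exp(s_0^T X)$, does not in fact dominate the difference quotients uniformly on a neighborhood of $s_0$: for $s \neq s_0$ the integrand involves $\exp(s^T X)$, which outgrows every polynomial multiple of $\exp(s_0^T X)$ along directions where $(s - s_0)^T X \to +\infty$. The fix is routine: for $\|s - s_0\| \le \delta$ enlarge the envelope to $c\,(1+\|X\|)^{k}\exp(s_0^T X)\exp(\delta \|X\|_1)$ and bound $\exp(\delta \|X\|_1) \le \sum_{\sigma \in \{\pm 1\}^n} \exp(\delta\, \sigma^T X)$, each term of which is integrable against $\exp(s_0^T X)$ precisely because the Gaussian MGF is finite on all of $\mathbb R^n$. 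Your alternative remark is actually the cleaner route: expand $\exp(s^T X)$ in its power series and interchange sum and expectation, dominated by $\exp(\|s\|\,\|X\|)$, which is integrable by finiteness of all Gaussian exponential moments; this yields the coefficients $\mathsf E[X^\alpha]/\prod_i \alpha_i!$ directly. (Appealing to ``$M$ is entire'' via the closed form alone would be mildly circular before that interchange is justified, so the power-series domination is the step to keep.) With that adjustment the proof is complete.
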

While Lemma \eqref{sthm:normal_closure} does not provide an explicit formula for direct numerical implementation, it is straightforward to automatically generate the closure relations using a computer algebra system. In particular, we automatically construct moment equations using Sympy \citep{meurer_2017} and convert them to PyTorch functions.

\paragraph{Multivariate Lognormal Closure}

A log-normal random variable can be obtained by exponential transform of a normal random variable. This generalizes to the multivariate case. More specifically, let $Z \sim \mathcal N( \mu, \Sigma)$. Then we say
\begin{align*}
X = \exp( Z) \, 
\end{align*}
has a log-normal distribution. Here, the exponential is understood as acting component-wise.
\begin{lemma} \label{sthm:lognormal_closure}
Let $X$ be as above and $\alpha \in \mathbb R^n$ a multi-index. Then
\begin{align*}
\mathsf E[ X^\alpha ]  &= \left( \prod_{i} \frac{ \mathsf E[ X_i]^{2 \alpha_i}  }{ \mathsf E[ X_i^2 ]^{\frac{\alpha_i}{2} } }  \right) \left( \prod_{i,j}    \frac{ \mathsf E[ X_i X_j]^{\frac{\alpha_i \alpha_j}{2} }  }{ \mathsf E[ X_i ]^{\frac{\alpha_i \alpha_j}{2} }  \mathsf E[X_j]^{\frac{\alpha_i \alpha_j}{2} }   }   \right) \, 
\end{align*}
\end{lemma}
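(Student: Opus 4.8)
The plan is to reduce the claim to the Gaussian moment generating function already established for the normal closure, and then to trade the underlying Gaussian parameters for the first- and second-order moments of $X$ that the closure actually propagates.

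First I would exploit that the exponential in $X = \exp(Z)$ acts componentwise, so that for a multi-index $\alpha$ the power moment collapses to a single exponential,
\begin{align*}
X^\alpha = \prod_{i=1}^n \exp( \alpha_i Z_i ) = \exp( \alpha^T Z ) \, .
\end{align*}
Since $Z \sim \mathcal N(\mu, \Sigma)$, evaluating the moment generating function from Lemma \ref{sthm:normal_closure} at the argument $s = \alpha$ immediately yields the raw moment in closed form,
\begin{align*}
\mathsf E[ X^\alpha ] = \exp\Bigl( \alpha^T \mu + \frac12 \alpha^T \Sigma \alpha \Bigr) = \exp\Bigl( \sum_i \alpha_i \mu_i + \frac12 \sum_{i,j} \alpha_i \alpha_j \Sigma_{ij} \Bigr) \, .
\end{align*}

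The remaining work is purely to eliminate $\mu$ and $\Sigma$ in favour of $\mathsf E[X_i]$, $\mathsf E[X_i^2]$ and $\mathsf E[X_i X_j]$. I would obtain the inversion by specializing the formula above to $\alpha = e_i$, $\alpha = 2 e_i$ and $\alpha = e_i + e_j$, giving $\log \mathsf E[X_i] = \mu_i + \frac12 \Sigma_{ii}$, $\log \mathsf E[X_i^2] = 2 \mu_i + 2 \Sigma_{ii}$ and $\log \mathsf E[X_i X_j] = \log \mathsf E[X_i] + \log \mathsf E[X_j] + \Sigma_{ij}$. Solving the first two produces $\Sigma_{ii} = \log \mathsf E[X_i^2] - 2 \log \mathsf E[X_i]$ and $\mu_i = 2 \log \mathsf E[X_i] - \frac12 \log \mathsf E[X_i^2]$, while the third gives the uniform covariance expression $\Sigma_{ij} = \log \bigl( \mathsf E[X_i X_j] / ( \mathsf E[X_i] \mathsf E[X_j] ) \bigr)$, which is consistent with the diagonal case $i = j$.

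Finally I would substitute these back into the raw moment and collect exponents: the term $\sum_i \alpha_i \mu_i$ is exactly the logarithm of the first product in the statement, and $\frac12 \sum_{i,j} \alpha_i \alpha_j \Sigma_{ij}$ is exactly the logarithm of the second, so exponentiating recovers the identity. I do not expect a conceptual obstacle here; the only care needed is the bookkeeping of the grouping chosen in the statement, which splits the mean contribution (first product) from the covariance contribution (second product). One must check that the diagonal indices $i = j$, which enter implicitly through both inversions, are accounted for exactly once so that no factor is double counted.
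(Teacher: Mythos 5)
Your proof is correct and follows exactly the route the paper indicates: write $X^\alpha = \exp(\alpha^T Z)$, evaluate the Gaussian moment generating function at $s = \alpha$, and invert $\mu_i = 2\log \mathsf E[X_i] - \tfrac12 \log \mathsf E[X_i^2]$, $\Sigma_{ij} = \log\bigl(\mathsf E[X_i X_j]/(\mathsf E[X_i]\,\mathsf E[X_j])\bigr)$ before substituting back. The paper only states this strategy in one sentence, so your write-up simply supplies the inversion and bookkeeping details it leaves implicit, and these check out (including the consistent treatment of the diagonal terms $i=j$).
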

%\begin{proof}
% By definition of the multivariate lognormal distribution, we have
%\begin{align*}
%\mathsf E[ X^\alpha ] &= \mathsf E \left[ \exp \left(  \alpha^T Z \right) \right] \, ,
%\end{align*}
%where $Z \sim \mathcal N( \mu, \Sigma)$. The r.h.s. expression corresponds to the moment generating function of a multivariate normal distribution. Thus, we get
%\begin{align} \label{seq:lognormal_multi_moment}
%\mathsf E[ X^\alpha ] &= \exp \left( \mu^T \alpha + \frac{1}{2} \alpha^T \Sigma \alpha  \right)  \, . 
%\end{align}
%The remaining task is to express $\mu $ and $\Sigma$ in terms of first and second order moments. For this, we observe that
%\begin{align*}
%\mathsf E[ X_i ] &= \exp \left( \mu_i + \frac{1}{2} \Sigma_{ii} \right)  \, , \\
%\mathsf E[ X_i X_j ] &= \exp \left( \mu_i + \mu_j  + \frac{1}{2} ( \Sigma_{ii} +  \Sigma_{ij} + \Sigma_{ji} + \Sigma_{jj} )  \right)  \\
%&\quad = \mathsf E[ X_i ] \mathsf E[ X_j ] \exp( \Sigma_{ij} )
%\end{align*}
%By inverting this system of equations we get
%\begin{align*}
%\exp(\mu_i) &= \frac{ \mathsf E[ X_i]^{2}  }{ \mathsf E[ X_i^2 ]^{\frac{1}{2} } }  \\
%\exp( \Sigma_{ij} ) &=  \frac{ \mathsf E[ X_i X_j]  }{ \mathsf E[ X_i ] \mathsf E[X_j]  }  \\
%\end{align*}
%The claim follows by splitting the exponential in \eqref{seq:lognormal_multi_moment} and inserting the last result. 
%\end{proof}
This result can be shown by exploiting that $\mathsf E[ X^\alpha ] = \mathsf E \left[ \exp \left(  \alpha^T Z \right) \right]$ corresponds to the moment generating function of a normal distribution. As a consequence of Lemma \ref{sthm:lognormal_closure}, we obtain the explicit closure function
\begin{align} \label{seq:lognormal_closure_function}
\mathrm{Cl} ( m, M, \alpha) =  \left( \prod_{i} \frac{ m_i^{2 \alpha_i}  }{ M_i^{\frac{\alpha_i}{2} } }  \right) \left( \prod_{i,j}    \frac{ M_{ij}^{\frac{\alpha_i \alpha_j}{2} }  }{ m_i^{\frac{\alpha_i \alpha_j}{2} }  m_j^{\frac{\alpha_i \alpha_j}{2} }   }   \right) \, . 
\end{align}
The log-normal closure \eqref{seq:lognormal_closure_function} can be implemented efficiently using tensor operations. It is also differentiable and thus suitable for backpropagation.

\subsection{Rescaling} \label{ssec:rescaling}

Consider a drift without rescaling of the form \eqref{seq:stat_control}. While leading to a convenient quadratic objective function, this form of the control has two major drawbacks. The first drawback is that the matrix-valued function $g$ in \eqref{eq:quadratic_form} is of dimension $(n\cdot m, n \cdot m)$ with $n$, $m$ corresponding to the dimension of the state space and the number of control features, respectively. Assuming the number control features is  proportional to the dimension, the function $g$ requires $\mathcal O(n^4)$ elements.  The second problem is specific to models with state-dependent diffusion term. In this case, the elements of $g$ are of the form $\mathsf E[ T_i(Z_t) T_j(Z_t) D^{-1}_{kl} (Z_t) ]$. This expression requires an analytic inverse of the diffusion tensor which is rarely available. We now discuss two special choices of rescaling. First, consider a case where the diffusion term $b$ is known analytically and set $R = b$. We then have
\begin{align*}
\tilde D^-1(x)  \equiv b(x)^T D^{-1} (x) b(x) = b(x)^T \left( b(x) b(x)^T \right)^{-1} b(x) = I \, . 
\end{align*}
The matrix $g$ now only depends on moments of the form $\mathsf E[ T_i (Z_t, t) T_j (Z_t, t)]$. As a consequence, the objective function becomes independent of the diffusion tensor. In addition, the number of non-zero elements of $g$ now scales as $\mathcal O(n^2)$.  The second choice of rescaling aims at the case where we are provided with $D$ rather than $b$ and thus consider $R = D$ leading to
\begin{align}
\tilde D^{-1}(x) = D(x)^T D^{-1} (x) D(x) = D(x,t) \, . 
\end{align}
While this choice does not improve the scaling, $g$ now depends on expressions of the form $\mathsf E[ T_i(Z_t) T_j(Z_t) D_{kl} (Z_t) ]$ such that we get rid of the inverse diffusion tensor. It also has an interesting intuitive interpretation. Recall that the true posterior drift is given by $\bar a(x,t) = a(x,t) + D(x,t) \nabla \log( \beta(x,t) ) $. Thus, the ansatz corresponds to approximating the log-transformed backward function by a linear feature model. 

\subsection{A Standard Approximation} \label{ssec:standard_approximation}

Inspired by the variational Gaussian process approximation, we would like to construct a method that approximates the data-driven term by a feedback control linear in the state and requires first and second order moments. This corresponds to the choices
\begin{equation*}
\begin{split}
S(x) &= (x_1, \ldots, x_n, x_1^2, x_1 x_2 , x_2^2, \ldots,  x_n^2 )^T \, , \\
T(x) &= (1, x_1, \ldots, x_n )^T \, . 
\end{split}
\end{equation*}
For this special class, it is convenient to represent the control in terms of functions $u_0$, $u_1$ such that we can write
\begin{align}
v(t) T(x) = u_0(t) + u_1(t) x
\end{align}
A short calculation shows that the moment equations are given by
\begin{equation*}
\begin{split}
\dot m(t) &= \mathsf E[ a(Z_t) ] + \mathsf E[ R(Z_t) ] u_0(t) + \mathsf E( R(Z_t) u_1(t) Z_t ]  \\ 
\dot { M}(t) &= E[ a(Z_t) Z_t^T] + E[ Z_t a(Z_t)^T] + \mathsf E[ D(Z_t) ]  \\
&\quad  + \mathsf E[ R(Z_t) u_0 Z_t^T ]  + \mathsf E[ Z_t u_0 Z_t^T R(Z_t)^T ] \\
&\quad + \mathsf E[ R(Z_t) u_1(t) Z_t Z_t^T ] + \mathsf E[ Z_t Z_t^T u_1(t)^T  R(Z_t)^T  ]
\end{split}
\end{equation*} 

\subsection{Specific Subclasses} \label{ssec:examples}

In this section, we present moment equations for special classes and more specific models considered in the experiment section. 

\paragraph{Constant Diffusion}

For models with constant diffusion terms $b$, we can always choose the corresponding rescaling. In combination with the approach outlined in \ref{ssec:standard_approximation}, we obtain the moment equations 
\begin{equation*}
  \begin{split}
  \dot m(t) &= \mathsf E[ a(Z_t) ] + b u_0(t) + b u_1(t) m(t) ]  \, , \\ 
  \dot M(t) &= E[ a(Z_t) Z_t^T] + E[ Z_t a(Z_t)^T] +  b b^T \\
  &\quad + b u_0(t) m(t)^T + m(t) u_0(t)^T b^T +  b u_1(t) M (t)  + M(t) u_1 (t)^T b^T   \, .
  \end{split}
\end{equation*} 
The second function required is the contribution to the KL-divergence that can be provided in terms of $L$ or $g$ (c.f. Thm. \ref{thm:control_problem}). Here, using $L$ is more convenient and we get
\begin{align*}
  L(u(t), m(t), M(t)) = \frac{1}{2} \left( u_0(t)^T u_0(t) + 2 u_0(t)^T u_1(t) m(t) + \mathrm{Tr} (u_1(t)^T u_1(t) M(t) )  \right) \, . 
\end{align*}
Since the last equation is model-independent, we have implemented a base class from which custom models can be derived. In particular, implementation of a given model only requires custom functions for $\mathsf E[ a(Z_t) ]$ and $E[ a(Z_t) Z_t^T]$. For polynomial drift functions, the required expectations can be evaluated straightforwardly using our Gaussian moment closure implementation in Sympy (c.f. Lemma \ref{sthm:normal_closure}).

\paragraph{Population Models} \label{ssec:population_models}
We consider a general population model defined by the chemical Langevin equation \eqref{eq:chemical_langevin}. Here, the diffusion term $b(x)$ is not given and we have only access to the diffusion tensor $D(x) = V^T \mathrm{diag}(h(X_t)) V$. We will therefore choose a linear control with rescaling $R = D$. In the regime where the CLE is valid, we typically have $X_i \gg 1$. We will also restrict the discussion to physically plausible systems with $s_{ik} \leq 2$.  Under these conditions, the propensity functions can be approximated as
 \begin{align} \label{eq:approximate_propensity}
h_i(x) = c_i \prod_{k=1}^d x_k^{s_{ik}} \, . 
\end{align}
For the proposed variational process class, we obtain the moment equations
\begin{equation}
\begin{split}
m_i(t) &=  V_{ji}\mathsf E[ h_j(Z_t) ] + V_{ki}  V_{kj} u_{0, j}(t)  \mathsf E[ h_k (Z_t) ] + V_{li} V_{lj} u_{1, jk}(t) \mathsf E[h_l(Z_t) Z_{t, k}] \\
M_{ij}(t) &=  V_{ki} \mathsf E [ h_k (Z_t) Z_{t, j} ]  + V_{kj} \mathsf E [ h_k (Z_t) Z_{t, i} ]  + V_{ki} V_{kl} u_{0,l} (t) \mathsf E[ h_k(Z_t) Z_{t, j} ]  + V_{kj} V_{kl} u_{0,l} (t) \mathsf E[ h_k(Z_t) Z_{t, i} ]  \\
&\quad + V_{ki} V_{kl} u_{1, lm} (t) \mathsf E[ h_k(Z_t) Z_{t,m} Z_{t, j} ] + V_{kj} V_{kl} u_{1, lm} (t) \mathsf E[ h_k(Z_t) Z_{t,m} Z_{t, i} ] +  V_{ki} V_{kj} \mathsf E[ h_k(Z_t) ] 
\end{split}
\end{equation}
While these expressions may look unwieldy, we observe that when the propensities are modeled by \eqref{eq:approximate_propensity}, all expectations in \eqref{eq:approximate_propensity} are of the form $\mathsf E[ Z_t^\alpha]$ and can be easily evaluated with the generic log-normal closure function \ref{seq:lognormal_closure_function}. In addition, expectations required for the function $g$ are of the same form. 

\paragraph{Multivariate Geometric Brownian Motion}

For the multivariate Brownian motion model described in the main text, we consider a linear control with rescaling $b$. We obtain the moment equations
\begin{equation} \label{seq:mgbm_moments}
\begin{split}
\dot m_i(t) &= r_i m_i(t) +  m_i(t)  R_{ij}  u_{0, j} (t) +R_{ij} u_{1, jk} M_{ik}(t)  \, , \\
\dot M_{ij} &= r_i M_{ij}(t) + r_j M_{ji} (t) + R_{ik} u_{0,k} M_{ij}(t) + R_{jk} u_{0,k} M_{ji}(t) \\
&\quad  + R_{ik} u_{1, kl} \mathsf E[ Z_{t, i} Z_{t, j} Z_{t,l} ] +  R_{jk} u_{1, kl} \mathsf E[ Z_{t, i} Z_{t, j} Z_{t,l} ]   + D_{ij} M_{ij} (t)
\end{split}
\end{equation}
where we have used Einstein sum convention. In order to obtain closed equations, we compute the third order moments $ E[ Z_{t, i} Z_{t, j} Z_{t,l} ] $ via the general log-normal closure formula given in Lemma \ref{sthm:lognormal_closure}. Since for the rescaling with $b$, the second function $g$ becomes independent of the model,  \eqref{seq:mgbm_moments} is the only model specific quantity required for implementation. 

\subsection{Parameter Inference}

\subsubsection{Extended Control Problem} \label{ssec:extended_control_problem}

Variational parameter inference corresponds to maximizing the evidence lower bound jointly with respect to the model parameters $\theta$ and variational parameters $u$. If we instead minimize the negative ELBO, we obtain the optimization problem
\begin{align} \label{seq:extended_objective_function}
\min_{\theta, u} \quad   \underbrace{   D_\mathrm{KL} [ P^Z \, || \, P^X ]  - \sum_{k=1}^n \mathsf E[ \log p(y_k \mid Z_{t_k} ) ] }_{\equiv J[\theta, u]} \,  . 
\end{align}
We can now do the same reductions as for the derivation of the smoothing control problem
\begin{equation}  \label{seq:extended_variational_problem_constrained}
\begin{aligned}
\min_{\theta, u, \varphi} & &  & \frac{1}{2}\int_0^T u(t)^T g( \theta, \varphi (t) )  u(t) \, \mathrm dt -\sum_{k=1}^n F_i(\varphi(t_k) ) \\
\mathrm{s.t. } & &  &\dot  \varphi(t) =  \mathsf E [ A^\dagger S(Z_t) ]     \,  \\   
\end{aligned}  \, . 
\end{equation}
Of course, the dynamic constraint depends on $\theta$ as well. Three methods are commonly used to solve the variational inference problem \eqref{seq:extended_objective_function}.  
\paragraph{Coordinate Descent} Starting from an initial guess $\theta^{(0)}$, $u^{(0)}$ the updates are computed as
\begin{align*}
u^{(i+1)} &= \min_u J[\theta^{(i)}, u] \, , \\
\theta^{(i+1)} &= \min_\theta J[ \theta ,  u^{(i+1)} ]  \, . 
\end{align*}
This is the classical variational expectation maximization(VEM) algorithm used in mean field variational inference. It is most effective when the updates can be computed in closed form. In the scenario considered here, this will typically not be the case. Even if possible, obtaining the closed form updates requires model specific calculations that we try to avoid. We therefore do not consider the VEM any further. 
\paragraph{Gradient Descent}
In Prop. 2, we have presented regular and natural gradient descent in the controls to solve the smoothing problem. The proof can be extended straightforwardly to include a gradient update with respect to $\theta$ as well. More explicitly, the parameter updates take the form
\begin{align} \label{seq:parameter_gradient}
\theta^{(i+1)} = \theta^{(i)} - h\int_0^T \left(  L_\theta ( \theta^{(i)}, u^{(i)}(t),   \varphi^{(i)} (t) ) -  f_{\theta}(  \theta^{(i)}, u^{(i)} (t), \varphi^{(i)} (t) )  \eta^{(i)} (t) \right)  \mathrm{d} t \, 
\end{align}
with notational conventions as in Prop. 2. Eq. \eqref{seq:parameter_gradient} corresponds to a regular gradient and can be evaluated without model specific derivations based on automatic differentiation.

\paragraph{Alternating Gradient Descent}
A third alternative is to iteratively take a number of gradient steps for the $u$ and $\theta$ while keeping the other fixed (see Algorithm \ref{alg:agd}). This method has two main advantages over the full gradient descent. First, we can use separate step sizes for the model and variational parameter updates. Second, since $\theta$ is fixed for the descent in $u$, we can still exploit the natural gradient for the latter.  The alternating gradient descent can be seen as a hybrid between VEM and gradient descent. This is because if we performed every inner gradient descent up to convergence, the result would be equivalent to the VEM updates.

\begin{algorithm}
  \caption{Robust Alternating Gradient Descent for Moment-Based Variational Inference}
  \label{alg:agd}
\begin{algorithmic}[1]
  \STATE {\bfseries Input:} Initial guesses $\theta^{(0)}$, $u^{(0)}$, initial condition $\varphi(0)$, learning rates $h_0, h_1$.
  \FOR{ $i = 0, \ldots, i_\mathrm{max} $}
  \STATE $u^{(i, 0)} \rightarrow  u^{(i)}$
  \FOR{$k = 0, \ldots, k_\mathrm{max} $ }
  \STATE Set $u'$ according to \eqref{seq:natural_gradient_step}.
  \IF{ $J[ \theta^{(i)}, u' ] < J[\theta^{(i)}, u^{(i,k)} ] $ }
  \STATE  $h_0 \rightarrow \alpha \cdot h_0$, $u^{(i, k+1)} \rightarrow u'$
  \ELSE 
  \STATE $h_0 \rightarrow \beta \cdot h_0$,  $u^{(i, k+1)} \rightarrow u^{(i, k)}$
  \ENDIF
  \ENDFOR
  \STATE  $u^{(i+1)} \rightarrow u^{(i, k_\mathrm{max} )}$ 
  \STATE $\theta^{(i, 0)} \rightarrow  \theta^{(i)}$
  \FOR{ $k = 0, \ldots, k_\mathrm{max}$ }
  \STATE Set $\theta'$ according to \eqref{seq:parameter_gradient}.
  \IF{ $J[ \theta', u^{(i+1)} ] < J[\theta^{(i, k)}, u^{(i+1)} ] $ }
  \STATE  $h_1 \rightarrow \alpha \cdot h_1$, $\theta^{(i, k+1)} \rightarrow \theta'$
  \ELSE 
  \STATE $h_1 \rightarrow \beta \cdot h_1$,  $\theta^{(i, k+1)} \rightarrow \theta^{(i, k)}$
  \ENDIF
  \ENDFOR
  \STATE  $\theta^{(i+1)} \rightarrow \theta^{(i, k_\mathrm{max} )}$ 
  \ENDFOR
  \STATE {\bfseries Output:}  $\theta^{(i_\mathrm{max})}$, $u^{(i_\mathrm{max})}$
\end{algorithmic}
\end{algorithm}

\subsubsection{Amortized Inference} \label{ssec:amortized_inference}

In many scenarios ones observes not a single trajectory but a number of trajectories $\mathbf x_1, \ldots, \mathbf x_n$ produced independently from the same model underlying model with parameter $\theta$. Denote the corresponding noisy observations as $\mathbf y_1, \ldots, \mathbf y_N$. We use boldface to indicate that $\mathbf x_i$, $\mathbf y_i$ corresponds to trajectories of stochastic processes. However, to keep the following discussion simple, we will treat them informally as ordinary random variables. Thus, the joint data likelihood is given by
\begin{align*}
p( \mathbf y_1, \ldots, \mathbf y_n \mid \theta ) = \prod_{i=1}^N p_i( \mathbf y_i \mid \theta ) =  \prod_{i=1}^N \int  p_i( \mathbf y_i \mid \mathbf x_i ) p_i( \mathbf x_i \mid \theta ) \mathrm d \mathbf x_i \ \, 
\end{align*}
where $\mathbf x_i$ corresponds to the trajectory of the latent diffusion process in this case. The standard VI approach is to construct an evidence lower bound based on the proposal
\begin{align*}
q( \mathbf x_1, \ldots \mathbf x_n  ) = \prod_{i=1}^n q_i( \mathbf x_i \mid u_i ) \, . 
\end{align*}
In our case, every $q_i$ corresponds to a full stochastic process $Z_i$ parametrized by $u_i$ with $u_i$ being a function of time. Consequently, the joint variational inference problem becomes infeasible very quickly due to large memory and runtime requirements. We therefore use an amortized proposal of the form
\begin{equation*}
q( \mathbf x_1, \ldots \mathbf x_n  ) = \prod_{i=1}^n q_i( \mathbf x_i \mid h(y_i, \phi) ) 
\end{equation*}
where $h$ is a feed-forward neural network parametrized by $\phi$. The corresponding ELBO is given by
\begin{align}
\mathrm{ELBO}(\theta, \phi) = \sum_{i=1}^N \int q_i(\mathbf x_i \mid h(\mathbf y_i, \phi) ) \log p(\mathbf y_i \mid \mathbf x_i) - D_\mathrm{KL} [ q_i \, || \, p_i ] \, .
\end{align}
Now observe that every term in the sum above corresponds the objective function of the variational inference problem for a single trajectory. We may thus write
\begin{align} \label{eq:amortized_elbo}
-\mathrm{ELBO}(\theta, \phi) = \sum_{i=1}^N J[ \theta, h(\mathbf y_i, \phi) ] 
\end{align}
with $J[\theta, u]$  as defined in Eq. \ref{seq:extended_objective_function}. The key observation is now that $J$ is a scalar function and we are able to compute gradients with respect to both $\theta$ and $u$. We can therefore encapsulate the computation of $J$ and its gradients in a PyTorch module. This allows to use standard stochastic optimizers based on backpropagation to learn the model parameters $\theta$ and the inference network parameters $\phi$.

\section{Experiment Details} \label{ssec:experiment_details}

In this section, we provide the explicit equations for the examples discussed in the main text and give more detail regarding the experiments. 
\paragraph{Computational Resources}
Most experiments were run on a MacBook Pro, 2015 edition, using 2.7 GHz Intel Core i5 processor with 2 cores. We will refer to this setup as machine A. Some of the longer experiments  were run on machine B; an Intel Xeon E5-2680 v3 with 2,5GHz and 22 cores. Experiments for single trajectories were generally run on machine A. 

\subsection{Joint Inference and Smoothing} \label{ssec:joint_inference_smoothing}

We test our method with a multivariate geometric Brownian motion of dimension $n=4$. The system parameters used to generate the trajectory are given by
\begin{align*}
r = 10^{-4} \cdot \begin{pmatrix}
 1.0 \\
 2.64 \\
  1.5 \\
  3.2
\end{pmatrix}  \quad, \quad
\sigma = \begin{pmatrix}
0.0112 \\
 0.0102 \\
  0.0174 \\
  0.0130
\end{pmatrix}
\quad , \quad
\bar D = \begin{pmatrix}
1  &       -0.08 & -0.36  & 0.28 \\
 -0.08 &  1.    &      0.15 & -0.12 \\
 -0.36 &  0.15 & 1.   &  -0.52 \\
 0.28 & -0.12 & -0.52 & 1.    
\end{pmatrix} \, . 
\end{align*}
Here, $\sigma = \sqrt{ \mathrm{diag} (R R^T) }$ and $\bar D$ represents the correlation matrix obtained from normalizing $RR^T$ by $\sigma \sigma^T$. 
The simulation was started with an initial $X_0 = (1, 1, 1, 1)^T$. The corresponding $R$ was obtained from $D$ using a Cholesky decomposition. In this parameter regime, the system is noise-dominated. The parameters $r$ are thus not identifiable and we focus on recovering the correlation structure. The systems was observed over an interval $[0, 720]$ with observations every $7$ units. The observations were corrupted with Gaussian observation noise that acted independent on all components with a standard deviation $\sigma_\text{obs} = 0.01$. As a variational process class, we used second order moments with diffusion-rescaled linear control. The required equations are given in Sec. \ref{ssec:examples}.  Optimization was performed using Alg. \ref{alg:agd} with $i_\mathrm{max} = 50$ and $k_\mathrm{max} = 5$. The noise matrix was initialized as $R^{(0)} = 10^{-2} \cdot I$ corresponding to a correlation free process. In the main paper, we have shown the result of a single experiment. Here, we generate $n=100$ trajectories from the described model but use shorter trajectories over an interval $[0, 360]$. On each of these samples, we performed joint smoothing and inference. This experiment was run on machine B. We used multiprocessing with a pool of 15 workers to speed up the processing. Below, we give the average results for $\sigma*$ and $\bar D*$ along with corresponding standard deviation 
\begin{align*}
\sigma^* = \begin{pmatrix}
0.0105 \pm 0.002 \\
0.0098  \pm  .001  \\
 0.0156 \pm .002 \\
0.0118 \pm  0.001
\end{pmatrix}
\quad , \quad
\bar D^* = \begin{pmatrix}
 1    &     -0.03 \pm 0.15 & -0.31\pm 0.14  & 0.23 \pm 0.14 \\
 -0.03 \pm 0.15  & 1   &    0.13 \pm 0.13  & -0.08 \pm 0.14 \\
 -0.31 \pm 0.14 &  0.13  \pm 0.13 & 1    &   -0.46 \pm 0.11 \\
  0.23 \pm 0.14  & -0.08 \pm 0.14 & -0.46 \pm 0.11 & 1       
\end{pmatrix} \, . 
\end{align*}
Although the estimates of the correlation matrix show a bit of variation, the experiments demonstrates  that the inferred parameters are reasonable and fairly consistent across samples.

\subsection{Amortized Inference}

%Consider the Ornstein-Uhlenbeck process governed by the SDE
%\begin{align*}
%dX_t = - \gamma ( X_t - \mu ) \mathrm d t + \sigma \mathrm d W_t
%\end{align*}
%as presented in Sec. \ref{sec:ou_process}. We choose first and second order moment dynamics and a linear control with diffusion rescaling. This leads to the moment equations
%\begin{align*}
%\dot m (t) &= - \gamma ( m(t) - \mu )  + \sigma u_0(t) + \sigma u_1(t) m(t) \, ,   \\
%\dot M(t) &= -2\gamma M(t) + u_1(t) M(t) + M(t)  u_1(t)^T + \sigma \sigma^T \, . 
%\end{align*}
We follow the approach described in Sec. \ref{ssec:amortized_inference}. In order to construct the inference network, $u$ was restricted to a piece-wise constant function on an equidistant grid. This allows to represent $u$ as a matrix with rows and columns corresponding to the number of controls and the size of the time grid. In the specific experiment, the input layer consists of 16 unites corresponding to the 8 observations of two species. The input layer is followed by 6 ReLu-Linear layers of increasing  size with the final layer matching the dimensions of the control matrix. In order to train the model, we generated 1000 trajectories using the following parametrization
\begin{align*}
\gamma =  \begin{pmatrix} 
0.3 &  0 \\
 0 & 0.4 
 \end{pmatrix} \, , \quad 
 \sigma = \begin{pmatrix}
 0.2 &  0.1 \\
 0.1 &  0.15
 \end{pmatrix} \, , \quad
  \mu= \begin{pmatrix}
 -1.0 \\
 1.0
 \end{pmatrix} \, . 
\end{align*}
Observations were generated with independent Gaussian noise ($\sigma = 0.2$) on a time grid $t_\mathrm{obs} = (2.0, 4.0, 6.0, 8.0, 10.0, 12.0, 16.0, 18.0)$. In order to speed up training, we used a mini-batch size of 15. Since gradient computation for each sample requires a forward and a backward ODE integration, we implemented the mini-batch approach using multi-processing, such that each sub-process processed one sample in the usual way.

% \subsection{Data Efficiency and Extrapolation}

% Consider the standard approximation with first and second order moments and linear control (see Sec. \ref{ssec:standard_approximation}). The main difference compared to the Gaussian process approximation is that the prior drift is contained in our approach. This can have advantages in terms of data efficiency and extrapolation power. To demonstrate this, we investigated the behavior of the variational smoother depending on the number of observations provided on the Lorenz '63 model, which is given by drift
% \begin{align*}
%   a(x) = \begin{pmatrix}
%   \theta_1 (x_2 - x_1) \\
%   \theta_2  x_1 - x_2 - x_1 x_3 \\
%   x_1x_2 - \theta_3 x_3
%   \end{pmatrix} 
% \end{align*}
% and a diagonal driving noise $b$ with $b_{ii} = \sqrt{10}$. As shown in Fig. \ref{simg:lorenz63}, a single observation early in the trajectory is sufficient to predict the trajectory over the range of interest. In contrast, the Gaussian process approximation gets stuck in local minima during the optimization as long as the number of observations is small. In this example, seven observations are required such that VGPA can overcome the local minima. In this case, both variational approximations are visually indistinguishable. 

\subsection{Code}

For more details regarding hyperparameters and implementation specifics, we refer to accompanying code available at \url{https://git.rwth-aachen.de/bcs/projects/cw/public/mbvi_sde}.

% \begin{figure}[ht]
% \centering
% \includegraphics{../../img/lorenz63_vgpa_vs_mbvi.pdf}
% \caption{Smoothing for the three-dimensional Lorenz '63 model. The three components of the system are shown from left to right. In each row, the underlying simulated state is identical but the number of observations passed to the smoothing algorithms vary. Each panel shows the ground truth (dotted black), the observations (black cross) and the posterior mean of the Gaussian process approximation (blue) and the moment-based variational approach (red). }
% \label{simg:lorenz63}
% \end{figure}

% \section{Algorithms}

\end{document}